\title[ Risk Aware Multi-Armed Bandits]{Exploration vs Exploitation vs Safety:\smallskip\\Risk-Aware  Multi-Armed  Bandits}
  \author{\Name{Nicolas Galichet} \Email{Nicolas.Galichet@lri.fr}\\
  \Name{Mich\`ele Sebag} \Email{Michele.Sebag@lri.fr}\\
   \Name{Olivier Teytaud} \Email{Olivier.Teytaud@lri.fr}\\
   \addr
TAO, CNRS - INRIA - LRI, Universit\'e Paris Sud, F-91405 Orsay} 
\begin{document}

\maketitle
\setcounter{page}{245}


 


\def\XX{{\sc MaRaB}}

\def\e{\epsilon}
\def\EE{{\rm I\hspace{-0.50ex}E}}
\def\CV{\mbox{CVaR}}

\def\version{5}

\newtheorem{prop}{Proposition}[section]
\newtheorem{cor}[prop]{Corollary}
\newtheorem{lem}[prop]{Lemma}
\def\Exp{ExpExp}
\def\MV{MV-LCB}

\begin{abstract}
Motivated by applications in energy management, this paper presents the Multi-Armed Risk-Aware Bandit (\XX) algorithm. With the goal of limiting the exploration of risky arms, \XX\ takes as arm quality its conditional value at risk. When the user-supplied risk level goes to 0, the arm quality tends toward the essential infimum of the arm distribution density, and \XX\ tends toward the MIN multi-armed bandit algorithm, aimed at the arm with maximal minimal value. As a first contribution, this paper presents a theoretical analysis of the MIN algorithm under mild assumptions, establishing its robustness comparatively to UCB. The analysis is  supported by extensive experimental validation of MIN and \XX\ compared to UCB and  state-of-art risk-aware MAB algorithms on  artificial and real-world problems.
\end{abstract}
\begin{keywords}
Multi-armed bandits, Risk awareness, Risk aversion, Conditional Value at Risk, Max-min, Energy policy 
\end{keywords}

\section{Introduction}
The multi-armed bandit (MAB) framework has been intensively investigated in the last decade, handling the exploration {\em vs} exploitation dilemma through diverse selection rules, e.g. the upper confidence bound (UCB) criterion \citep{Auer:2002}, KL-UCB \citep{KL-UCB} or Thompson sampling \citep{Chappelle}. The rise of MAB studies is explained as they tackle the rigorous analysis of algorithms in a both simplified and challenging setting.   
On the one hand, the MAB framework defines a simplified reinforcement learning problem \citep{Sutton,Szepesvari}, where the state space involves a single state. 
On the other hand, MAB is concerned with lifelong learning, and the optimization of the policy return {\em while learning it}. RL classically distinguishes the learning phase, and the production phase where the learned policy is applied. Most generally MAB makes no such distinction and aims at minimizing the cumulative regret suffered compared to the oracle strategy, during the whole learning/production life of the MAB system.

This paper specifically focuses on application domains where the exploration of the environment involves hazards and risks. Examples of such domains are energy management and robotics. In robotics, policies learned using a generative model of the environment (e.g. a simulator) happen to be inaccurate when ported on the actual robot, a phenomenon known as {\em reality gap}. On the other hand, training the robot {\em in-situ} entails significant risks due to mechanical fatigue and hazards. 
In the domain of energy management, simulators define noisy optimization problems as they must reflect the large variance of the energy load, demand and price along time; in the meanwhile, exploratory policies face huge losses if insufficient supply must be compensated for by buying additional energy.  

In risky environments, the goal becomes to learn a policy which achieves some trade-off between exploration, exploitation and safety: while the goal is to minimize the regret through a standard exploration {\em vs} exploitation trade-off, one also wants to minimize the risk incurred by the learned policy and/or to maximize the safety of the learning agent. Risk minimization is also enforced by considering short time horizons.

The importance of risk minimization, either within the MAB setting \citep{Sani2012, Maillard2013} or generally in reinforcement learning \citep{MoldovanNIPS2012} has been recently acknowledged  (section \ref{sec:SoA}). A new algorithm, the multi-armed risk-aware bandit (\XX) algorithm, is presented in section \ref{sec:alg}. \XX\ aims at the arm with maximal {\em conditional value at risk} level $\alpha$ (\CV$_\alpha$), where \CV$_\alpha$ is the expected policy return in the prescribed quantile. When $\alpha$ goes to 0, \XX\ tends toward the MIN multi-armed bandit algorithm, aimed at the arm with maximal minimal value. A theoretical analysis of the MIN algorithm shows that it achieves logarithmic regret under mild assumptions (section \ref{sec:thms}). Extensive empirical validation on artificial problems shows that \XX\ less explores the arms with low distribution tails compared to UCB and \citep{Sani2012}, at the expense of a 
moderate regret increase compared to UCB. 
 A real-world problem related 
to battery management with a stochastic demand is also considered to 
investigate the robustness of the approach (section \ref{sec:expval}).
The paper concludes with a discussion and some perspectives for further research.

\section{State of the art}\label{sec:SoA}
After introducing the multi-armed bandit (MAB) formal background and referring the reader to \citep{Robbins:1952,Auer:2002} for a comprehensive presentation, this section briefly reviews the state of
the art related to risk-aware MAB strategies. 

\subsection{Formal background}
\label{sec:SoA_MAB}
A multi-armed bandit problem involves $K$ independent arms, each of which has an unknown reward distribution with bounded discrete or continuous support.
The literature mostly considers two settings, that of Bernoulli distributions where the $i$-th arm yields reward 1 with probability $\mu_i$ and reward 0 otherwise, and that of distributions with support $[0,1]$, with mean $\mu_i$ and standard deviation $\sigma_i$. Let $T$ denote the time horizon. 
At each time step $t = 1\ldots T$, a MAB algorithm selects an arm $i_t$
and receives reward $r_{i_t,n_{i,t}}$, drawn after the $i_t$-th distribution, where $n_{i,t}$ denotes the number of times the $i$-th arm has been selected up to time $t$. 
The choice is made upon the basis of the empirical estimates of the $K$ arm distributions so far, the empirical mean estimate $\widehat{\mu_{i,t}}$  
($\widehat{\mu_{i,t}} = \frac{1}{n_{i,t}} \sum_{u=1}^{u=n_{i,t}}  r_{i,u}$) and possibly the empirical variance estimate $\widehat{\sigma_{i,t}}$  (${\widehat{\sigma_{i,t}}}^2 = \frac{1}{n_{i,t}} \sum_{u=1}^{u=n_{i,t}} (r_{i,u} - \widehat{\mu_{i,u}})^2$). 
The MAB goal is to maximize the sum of gathered rewards along learning, or equivalently to minimize the cumulative regret suffered compared to the oracle strategy, which plays the best arm $i^*$ in each time step. 
One distinguishes the theoretical cumulative regret at time $t$, denoted $\mathcal{R}_t$, and the empirical cumulative regret, denoted $\widehat{\mathcal{R}_t}$, respectively defined as:
\begin{equation*}
\mathcal{R}_t = t \times \mu_{i^*} - \sum_{k=1}^K n_{k,t} \mu_{k} \hspace{1in}  
\widehat{\mathcal{R}_t}= t \times \mu_{i^*} - \sum_{k=1}^K n_{k,t} \widehat{\mu_{k,t}}
\end{equation*}
We will use the theoretical cumulative regret in the algorithm analysis (section \ref{sec:thms}). Theoretical or empirical cumulative regrets will be used to experimentally assess the algorithms performance (section \ref{sec:expval}),  granted that the difference $|\mathcal{R}_t - \widehat{\mathcal{R}_t} |$ is in $O(\log(t))$ \citep{coquelin}.

Regret minimization is known to be an {\em exploitation vs exploration} trade-off problem: the best empirical arm should be selected often to maximize the actual gathered reward (exploitation); but some exploration is also required to actually identify the best arm.
Two prominent MAB strategies are the $\epsilon$-greedy strategy, which selects the best empirical arm with probability $1 - \epsilon$ and uniformly selects another arm with probability $\epsilon$, and the celebrated upper confidence bound (UCB) strategy proposed by \cite{Auer:2002}, which selects at time $t$ the arm maximizing criterion 
$ \widehat{\mu_{i,t}} + C \sqrt{\frac{log(t)}{n_{i,t}}}$,
with $C > 0$ a parameter controlling the exploration {\em vs} exploitation tradeoff. Another strategy is the KL-UCB strategy \citep{KL-UCB}.  
While the $\epsilon$-greedy strategy suffers a linear regret, UCB and KL-UCB suffer a logarithmic regret, which is known to be optimal  \citep{LaiRobbins:1985}. KL-UCB further improves on UCB as it yields the optimal regret rate.  

\subsection{Related work}\label{sec:defcvar}
An emerging trend in the field of reinforcement learning and MAB 
is concerned with the risk issue, when the exploring agent might face 
hazards going beyond mere under-optimal performances. In such cases, mottos such as {\em Optimism in front of the Unknown!}~attached to the UCB strategy, are inappropriate. 
A first issue concerns the definition of risks. Several definitions have been proposed to account for risk awareness and risk aversion, taking inspiration from the literature in economics \citep{Arrow71}.

The first criterion referred to as mean-variance (MV) \citep{Markowitz52} considers a weighted sum of the reward expectation $\mu$ of the policy and its estimated standard deviation. Formally, the goal is to find a policy minimizing  $\sigma^2 - \rho\mu$, where $\rho > 0$ increases like the user's {\em risk tolerance}.

The conditional value at risk (\CV) considers the quantiles of the reward distribution. Formally, let $0 < \alpha < 1$ be the target quantile level. 
The associated quantile value $v_\alpha$ is defined if it exists as the maximal value such that $X$ is less than $v_\alpha$ with probability $\alpha$ ($Pr(X < v_\alpha) = \alpha$, with $X$ the reward random variable). The remainder of the paper only considers continuous distributions, where $v_\alpha$ is always defined; the conditional value at risk $\alpha$ noted $\CV_\alpha$ is then defined as the average reward conditionally to $X < v_\alpha$:
 $\CV_\alpha [X]= \EE[X | X < v_\alpha]$.
Note that when the quantile level $\alpha$ goes to 0, \CV$_\alpha$ maximization coincides with the standard max-min strategy, aimed at the arm with maximal minimum reward. \CV\ maximization thus defines a relaxation of the max-min strategy, with quantile level $\alpha$ as relaxation parameter. 

Another criterion, the rank dependent utility \citep{RDU} inspired from the prospect theory due to Tversky and Kahneman (1979), is meant to model the distorted perception of probabilities, e.g. the over-estimation of rare events, through weighting the event rewards with a (non-linear) function of their rank. The RDU criterion will not be considered further as it relies on a complex specification of the risk aversion, whereas the above MV and \CV\ criteria involve a single scalar parameter, respectively $\rho$ and $\alpha$.

Risk aversion has been considered in the MAB setting, only tackling the mean-variance criterion to our best knowledge \citep{Sani2012}. Two algorithms are proposed. The first one, referred to as \MV, aims at minimizing the MV cumulative regret. It proceeds by adapting the UCB approach in the finite horizon $T$ context, selecting in each step $t$ the arm maximizing 
$ \widehat{\sigma_{i,t}}^2 - \rho \widehat{\mu_{i,t}} - (5 + \rho)
	   \sqrt{\frac{log(\frac{1}{\delta})}{2 n_{i,t}}}$,
where $\delta$ is adjusted depending on the time horizon $T$. As shown by 
\cite{Sani2012}, this selection rule leads to upper-bounding the theoretical cumulative regret (related to the MV criterion)  $R_t/t$ by $O(\frac{\log^2(t)}{t})$.
A simpler strategy referred to as \Exp\ decouples the exploration and the exploitation phases. All arms are uniformly launched during the 
exploration phase, and the arm with optimal empirical MV is selected ever after during the exploitation phase, with an $O( K T^{-\frac{1}{3}})$ regret bound  if the length $\tau$ of the exploration phase is fixed to $K(\frac{T}{14})^{2/3}$.

Risk issues have also been considered in the neighbor field of reinforcement learning in the last years. A first strategy relies on reversibility constraints, i.e. only visiting states $s$ such that one can always get back from $s$ to the initial state \citep{MoldovanA12}. In a  further work, 
\cite{MoldovanNIPS2012} proceed by considering an exponential utility function, where the policy return  $J$ is replaced by expression $exp\{J/\theta\}$. Parameter $\theta$ reflects the user's risk tolerance, akin the $\rho$ parameter in the mean var setting, with the difference that $\rho$ is weighted by the empirical standard deviation of the 
rewards. 

Another approach due to \cite{Mannor2011} formalizes risk-aware reinforcement learning as a 
multi-objective RL problem, aimed at simultaneously maximizing the cumulative reward and minimizing the cumulative standard deviation.

\section{Overview of \XX}\label{sec:alg}
This section describes the {\em Multi-Armed Risk-Aware Bandit} (\XX) algorithm, with same notations as in section \ref{sec:SoA_MAB}.
\def\CVj{\mbox{$\widehat{CVaR_{\alpha}}$}}
\def\cvj{\mbox{$\widehat{CVaR}$}}

\def\CVi{\mbox{$\widehat{CVaR_{\alpha,i}}$}}
\def\cvi{\mbox{$\widehat{CVaR_i}$}}
\def\ta{\mbox{$\lceil \alpha n_{i,t} \rceil$}}
\def\nit{\mbox{$max(1,\lceil \alpha n_{i,t} \rceil)$}}
\def\nia{\mbox{$n_{i,t,\alpha}$}}

\def\nja{\mbox{$n_{t,\alpha}$}}

The arm quality is set to its conditional value at risk $\alpha$, where parameter $\alpha$ ($0 < \alpha < 1$) is set by the user. 
After \cite{chen}, a non-parametric, consistent estimate of the conditional value at risk $\alpha$ of arm $i$, denoted \CVi\ (or \cvi\ for notational simplicity), is given as the average of the $\alpha$ quantile of rewards $r_{i,u}, u=1\ldots n_{i,t}$: assuming with no loss of generality that rewards are ordered by increasing value ($r_{i,u} \le r_{i,u+1}$), and noting \nia\ the ceiling integer of $\alpha \cdot n_{i,t}$ ($\nia = \lceil \alpha n_{i,t}\rceil$), then \cvi\ is set to the average of the lowest $n_{i,t,\alpha}$ rewards:
\begin{equation}
  \cvi = \frac{1}{\nia}\sum_{u=1}^{\nia} r_{i,u}
\label{eq:cvv}
\end{equation}

The goal of \XX\ is to find the arm with maximal \cvi. The selection rule controlling the exploration {\em vs} exploitation tradeoff proceeds by selecting the arm with best lower confidence bound on its \CV: 
\begin{equation}
  \mbox{select $i_t$~} = \mbox{argmax} \left \{\cvi - C \sqrt{\frac{\log(\lceil t\alpha\rceil)}{\nia}} \right \}
\label{eq:xx}
\end{equation}
with $C > 0$ a parameter controlling the exploration {\em vs} exploitation tradeoff.

\XX\ features a risk-averse or pessimistic behavior, due to the negative exploratory term in Eq. \ref{eq:xx}\,: if two arms have same empirical \CV, \XX\ will favor the arm which has been selected more often in the past. Note that such a behavior is actually observed in the economic realm, as trust $-$ i.e. a positive bias toward known good partners $-$ is at the core of economic exchanges. Such a bias indeed makes sense whenever
exchanges with unknown partners involve risks. 

A lack of exploration usually leads to myopic and under-optimal choices, 
sticking to the best options first encountered. Such a myopic behavior is however prevented in \XX\ for the following reason:
\XX\ examines each arm along two phases. 
In the first phase, referred to as initial phase ($n_{i,t} < \frac{1}{\alpha}$), the empirical 
quality of the $i$-th arm is set to its minimum reward (Eq. \ref{eq:cvv}), and therefore it monotonically decreases along time. In the second phase, referred to as stabilization phase, the estimate of the conditional value at risk is computed with increasing accuracy, with an approximation error going to 0 like $\sqrt{n_{i,t}}$ \citep{chen}. 

The duration of the initial phase increases as $\alpha$ decreases, as the maximization of the conditional value at risk $\alpha$ boils down to a standard max-min optimization problem. In the early iterations,  the \XX\ behavior thus coincides with that of the MIN algorithm, selecting in each time step the arm with maximal minimum reward. The only difference comes from the negative exploration term (lower confidence bound, LCB\footnote{This LCB must not be mistaken for the LCB used in MV-LCB \citep{Sani2012}, section \ref{sec:SoA}: as the MV-LCB reward is the weighted sum of the average standard deviation and means, where the weight of the empirical mean is negative, this LCB actually behaves as a UCB, optimistically favoring the exploration.}). 
\XX\ thus actually achieves some exploration in the beginnings: the 
arm quality \cvi\ monotonically decreases as the $i$-th arm is more visited ($n_{i,t}$ increases) in its initial phase, forcing \XX\ to consider less visited arms. 

However, if an arm gets poor rewards the first times it is visited, there is little chance it is visited again, all the more so as better arms 
enter their second phases (and their empirical quality converges toward their true conditional value at risk): there is no positive exploration term guaranteeing that any arm will be visited infinitely many times as $t$ goes to infinity. 

The theoretical analysis, presented in the next section, will thus focus on the limit algorithm of \XX, the MIN algorithm.

\section{Analysis}\label{sec:thms}
This section presents the analysis of the MIN algorithm, selecting in each time step the arm with maximal empirical minimal value, as  MIN is the limit algorithm of \XX\ when the risk level $\alpha$ goes to 0 and the exploratory constant $C$ is set to $0$. 

Under the assumption that the best arm w.r.t. its essential infimum also is the best arm in terms of expectation, it is shown that MIN achieves same logarithmic regret as UCB, with similar rate. Under slightly stronger assumptions, the MIN regret rate is significantly lower than for UCB. These two results rely on two lemmas. Firstly, under mild assumptions, the empirical minimum value for every arm converges exponentially fast toward its essential infimum. Secondly, with high probability over all arms their empirical minimum value are exponentially close to their essential minimum, where the probability increases exponentially fast with the number of iterations. 

\begin{lem}\label{lem41}
Let $\nu$ be a bounded distribution with support in $[0,1]$, with $a$ its 
essential infimum\footnote{The essential infimum being defined as the maximal value $a$ such that $\mathbb{P}(X < a) = 0$.}, and let us assume that $\nu$ is lower bounded in the neighborhood of $a$:
\begin{equation}
 \exists A > 0, \forall \epsilon > 0, \mathbb{P}(X \leq a + \epsilon) \geq A\e \mbox{~ with $X$ r.v. $\sim \nu$}
\label{eq:lowerboundA}
\end{equation}

Let $x_1 \ldots x_t$ be a t-sample independently drawn after $\nu$. 
Then, the minimum value over $x_{u}, u =1 \ldots t$ goes exponentially fast to $a$:
\begin{equation}
 \mathbb{P}(\underset{1\leq u \leq t}{\min} x_{u} \geq a + \epsilon) \leq \exp(-tA\epsilon)
\label{eq:lem1}
\end{equation}
\end{lem}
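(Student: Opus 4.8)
The plan is to use independence of the samples to factorize the probability that \emph{all} of them exceed $a+\epsilon$, to bound each factor by means of the lower-bound hypothesis Eq.~\ref{eq:lowerboundA}, and to conclude with the elementary inequality $1-x\le e^{-x}$.

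First I would rewrite the event of interest as an intersection: $\{\min_{1\le u\le t} x_u \ge a+\epsilon\} = \bigcap_{u=1}^{t}\{x_u \ge a+\epsilon\}$. Since $x_1,\dots,x_t$ are i.i.d.\ with law $\nu$, this gives
\[
\mathbb{P}\!\left(\min_{1\le u\le t} x_u \ge a+\epsilon\right) \;=\; \mathbb{P}(X \ge a+\epsilon)^{\,t} \;=\; \bigl(1 - \mathbb{P}(X < a+\epsilon)\bigr)^{t}.
\]

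Second, I would lower-bound $\mathbb{P}(X < a+\epsilon)$ using Eq.~\ref{eq:lowerboundA}. For the continuous distributions considered here one directly has $\mathbb{P}(X<a+\epsilon)=\mathbb{P}(X\le a+\epsilon)\ge A\epsilon$; in general, writing $\mathbb{P}(X<a+\epsilon)=\lim_{\epsilon'\uparrow\epsilon}\mathbb{P}(X\le a+\epsilon')$ and applying Eq.~\ref{eq:lowerboundA} to each $\epsilon'<\epsilon$ yields the same bound $\mathbb{P}(X<a+\epsilon)\ge A\epsilon$ in the limit. Hence $1-\mathbb{P}(X<a+\epsilon)\le 1-A\epsilon$.

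Finally, if $A\epsilon\ge 1$ then the probability on the left of Eq.~\ref{eq:lem1} is $0$ and there is nothing to prove; otherwise $0\le 1-A\epsilon<1$, and combining the two displays above with $1-x\le e^{-x}$ applied at $x=A\epsilon$ gives
\[
\mathbb{P}\!\left(\min_{1\le u\le t} x_u \ge a+\epsilon\right) \;\le\; (1-A\epsilon)^{t} \;\le\; e^{-tA\epsilon},
\]
which is exactly Eq.~\ref{eq:lem1}. There is no genuine obstacle in this argument; the only points needing a modicum of care are the passage from the hypothesis (stated with a non-strict inequality $X\le a+\epsilon$) to the strict-inequality tail that appears after factorization, handled above by a monotone limit, and the harmless boundary case $A\epsilon\ge 1$.
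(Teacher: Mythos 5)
Your proof is correct and follows essentially the same route as the paper: factorize the event by independence, bound each factor by $1-A\epsilon$ using the hypothesis, and conclude with $1-x\le e^{-x}$. The extra care you take with the strict-versus-non-strict inequality (via the monotone limit) and the degenerate case $A\epsilon\ge 1$ is sound and slightly more rigorous than the paper's own argument, which passes over both points silently.
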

\begin{proof}\\
As the $x_u$ are iid, it comes:
\begin{eqnarray*}
\mathbb{P}(\underset{1\leq u \leq t}{\min} x_{u} \geq a + \epsilon) & = &  \mathbb{P}(\forall u \in \{ 1,\ldots,t\}, x_{u} \geq a + \epsilon)\\
		& = & \prod_{u=1}^t \mathbb{P}( x_{u} \geq a + \epsilon)
		 \leq  (1-A\epsilon)^t 
      \leq  exp (-t A \epsilon)
\end{eqnarray*}
where the last inequality follows from $(1 - z) \leq exp(-z)$. 
\end{proof}
Under the assumption of a lower-bounded distribution probability in the neighborhood of its minimum, the convergence toward the minimum thus is faster than the convergence toward the mean. Specifically, the Hoeffding bound on the convergence toward the mean decreases exponentially like $-t \epsilon^2$, whereas after Eq. \ref{eq:lem1} the convergence toward the min decreases exponentially like\footnote{The convergence analysis considers an approximation error $\epsilon$ going to 0, hence $A \epsilon >> \epsilon^2$.} $-t A \epsilon$.\\

Under the same assumptions, with high probability the empirical min of each arm is exponentially close to its essential infimum after each arm has been tried $t$ times. 

\begin{lem}\label{firstprop}
 Let  $\nu_1 \ldots \nu_K$ denote $K$ distributions with bounded support in $[0,1]$ with $a_i$ their essential infimum. Let us assume that $\nu_i$ is lower bounded by some constant $A$ in the neighborhood of $a_i$ for $i=1 \ldots K$. \\
Denoting $x_{i,u}$, $~u=1 \ldots t$, $i = 1\ldots K$, $t$ samples independently drawn after $\nu_i$, one has:\\ 
\begin{equation}
\label{bounddeviation}
\mathbb{P}(\exists i \in \{1,\ldots,K \}, \underset{1\leq u \leq t}{\min} x_{i,u} \geq a_i + \epsilon) \leq K \exp(-tA \epsilon )
\end{equation}
\end{lem}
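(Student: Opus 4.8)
The plan is to reduce the statement to Lemma~\ref{lem41} by a union bound over the $K$ arms. Observe that the event $\{\exists i \in \{1,\ldots,K\},\ \min_{1\leq u \leq t} x_{i,u} \geq a_i + \epsilon\}$ is the union over $i$ of the events $E_i := \{\min_{1\leq u \leq t} x_{i,u} \geq a_i + \epsilon\}$, so by subadditivity of probability $\mathbb{P}(\bigcup_{i=1}^K E_i) \leq \sum_{i=1}^K \mathbb{P}(E_i)$.

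Next I would bound each term $\mathbb{P}(E_i)$ separately. For each fixed $i$, the samples $x_{i,1},\ldots,x_{i,t}$ are i.i.d.\ draws from $\nu_i$, which has essential infimum $a_i$ and, by assumption, satisfies the neighborhood lower bound $\mathbb{P}(X \leq a_i + \epsilon) \geq A\epsilon$ with the \emph{same} constant $A$ for all $i$. Hence the hypotheses of Lemma~\ref{lem41} hold for $\nu_i$, and that lemma gives $\mathbb{P}(E_i) = \mathbb{P}(\min_{1\leq u\leq t} x_{i,u} \geq a_i + \epsilon) \leq \exp(-tA\epsilon)$.

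Summing these $K$ identical bounds yields $\sum_{i=1}^K \mathbb{P}(E_i) \leq K \exp(-tA\epsilon)$, which is exactly inequality~\eqref{bounddeviation}. There is essentially no obstacle here: the only point requiring a word of care is that the bound on $\mathbb{P}(E_i)$ from Lemma~\ref{lem41} is uniform in $i$ precisely because the problem statement assumes a common lower-bound constant $A$; if the constants $A_i$ differed one would instead get $\sum_i \exp(-tA_i\epsilon)$, which one could still bound by $K\exp(-t(\min_i A_i)\epsilon)$. Since the cross-arm independence of the samples is not even needed (the union bound does not require it), the argument is complete in a few lines.
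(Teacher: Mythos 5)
Your proof is correct. It differs slightly in route from the paper's: you apply the union bound $\mathbb{P}(\bigcup_i E_i) \leq \sum_i \mathbb{P}(E_i)$ directly and then invoke Lemma~\ref{lem41} on each arm, whereas the paper first uses independence across arms to write the probability of the complementary event exactly as $1 - \bigl(1 - (1-A\epsilon)^t\bigr)^K$, and then relaxes it via the inequality $(1-z)^y \geq 1 - yz$ to obtain $K(1-A\epsilon)^t \leq K\exp(-tA\epsilon)$. The two arguments land on the identical bound, but yours is the more elementary and more general one: as you correctly observe, the union bound requires no independence between the samples of different arms, while the paper's intermediate exact expression does (and is then immediately discarded anyway). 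Your remark about a common constant $A$ versus arm-dependent constants $A_i$ is also accurate and a useful clarification of what the hypothesis is doing.
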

\begin{proof}
After Lemma \ref{lem41}, 
\begin{eqnarray*}
\mathbb{P}(\exists i \in \{1,\ldots, K\}, \underset{1\leq u \leq t}{\min} x_{i,u} \geq a_i + \epsilon)  &\leq&  1 - ( 1 - (1 -A \epsilon)^t)^K \\
&\leq&  K (1 - A \epsilon)^t \\
&\leq&  K\exp(-t A\epsilon)
\end{eqnarray*}
Where the first inequality follows from $(1 - z)^y \ge 1 - y.z$ and the second inequality from $(1 - z) \le exp(-z)$, which concludes the proof.
\end{proof}

Under the above assumptions on the arm distributions, if the optimal arm in terms of min value also is the optimal arm in terms of mean value, then the MIN algorithm achieves a logarithmic regret. 
\def\ecr{\widehat{\mathcal{R}_t}}
\def\tcr{{\mathcal{R}_t}}

\begin{prop}\label{xx}\label{prop43}
Let  $\nu_1 \ldots \nu_K$ denote $K$ distributions with bounded support in $[0,1]$ with $\mu_i$ (resp. $a_i$) their mean (resp. their essential infimum). Let us further assume that $\nu_i$ is lower bounded by some constant $A$ in the neighborhood of $a_i$ for $i=1 \ldots K$, and that the arm with best mean value $\mu^*$ also is the arm with best min value $a^*$. 
Let $\Delta_{\mu,i} = \mu^* -\mu_i$ (resp. $\Delta_{a,i} = a^* -a_i$) denote the mean-related (resp. essential infimum-related) margins.
Then, with probability at least $1-\delta$, the cumulative regret is upper bounded as follows:
\begin{equation}\label{regretboundprob}
\tcr \leq \frac{K-1}{A} \frac{ \Delta_{\mu,\max} }{\Delta_{a,\min}} \log\left(\frac{t K}{\delta}\right)+  (K-1) \Delta_{\mu, \max} 
\end{equation}
with $\Delta_{a,\min} = \underset{i}{\min} \, \Delta_{a,i}$ and $\Delta_{\mu,\max} = \underset{i}{\max} \, \Delta_{\mu,i}$.

Furthermore, the expectation of the cumulative regret is upper-bounded as follows for $t$ sufficiently large ($t \geq \frac{K-1}{A}\frac{\Delta_{a,\min}}{\Delta_{\mu,\max}}$):
\begin{equation}\label{eq:minexp}
\mathbb{E}[\tcr] \leq \frac{K-1}{A} \frac{\Delta_{\mu,\max}}{\Delta_{a,\min}} \left(\log\left(\frac{t^2KA}{K-1} \frac{\Delta_{a,\min}}{\Delta_{\mu,\max}} \right)+1\right) +(K-1) \Delta_{\mu,\max}
\end{equation}
\end{prop}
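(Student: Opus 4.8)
The plan is to bound the number of pulls $n_{k,t}$ of each suboptimal arm $k$ and then feed this into the regret decomposition $\tcr=\sum_{k\neq i^*}n_{k,t}\,\Delta_{\mu,k}\le\Delta_{\mu,\max}\sum_{k\neq i^*}n_{k,t}$, where $i^*$ is the arm of best mean value, which by assumption is also the arm of best essential infimum $a^*$. Each of the $K-1$ suboptimal arms is pulled once during the unavoidable initialization, which already accounts for the additive $(K-1)\Delta_{\mu,\max}$ term, so it suffices to control the number of \emph{subsequent} pulls of each suboptimal arm.

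The crucial structural fact is that the running minimum of the optimal arm never drops below its essential infimum $a^*$, since $\mathbb{P}(X<a^*)=0$ for $X\sim\nu_{i^*}$. Hence, once $i^*$ has been initialized, a suboptimal arm $k$ is selected at a given step only if its current running minimum is at least the largest running minimum over all arms, hence at least that of $i^*$, hence at least $a^*=a_k+\Delta_{a,k}$. Since a running minimum is non-increasing in the number of samples, if arm $k$ has already been drawn $m$ times beyond initialization and is drawn once more, then the minimum of those $m$ draws must be $\ge a_k+\Delta_{a,k}$; applying Lemma~\ref{lem41} to $\nu_k$ with $\epsilon=\Delta_{a,k}$ (legitimate by the lower-bound assumption Eq.~\ref{eq:lowerboundA}) gives $\mathbb{P}(n_{k,t}>m+1)\le\exp(-mA\Delta_{a,k})$. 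A crude union bound over the at most $t$ steps at which such an over-pull could be triggered also yields the weaker estimate $t\exp(-mA\Delta_{a,k})$, which is the form responsible for the $t$ inside the logarithm of Eq.~\ref{regretboundprob}. Choosing $m$ so that this equals $\delta/(K-1)$, namely $m\approx\frac{1}{A\Delta_{a,k}}\log\big(\frac{tK}{\delta}\big)$, and taking a union over the $K-1$ suboptimal arms shows that with probability at least $1-\delta$ every suboptimal $k$ satisfies $n_{k,t}\le 1+\frac{1}{A\Delta_{a,k}}\log(tK/\delta)$; substituting into the decomposition and using $\Delta_{a,k}\ge\Delta_{a,\min}$ and $\Delta_{\mu,k}\le\Delta_{\mu,\max}$ yields Eq.~\ref{regretboundprob}.

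For the expectation bound I would integrate the same tail via $\mathbb{E}[n_{k,t}]=\sum_{m\ge 0}\mathbb{P}(n_{k,t}>m)$, splitting the sum at the index $m_0$ beyond which the ($t$-dependent) tail bound drops below $1/t$: the truncated part contributes at most $m_0+1=O\big(\frac{1}{A\Delta_{a,k}}\log(t^2\cdots)\big)$, while on its complement $n_{k,t}\le t$ contributes at most $t\cdot(1/t)=1$. Summing over the $K-1$ suboptimal arms, bounding $\Delta_{a,k}\ge\Delta_{a,\min}$ and $\Delta_{\mu,k}\le\Delta_{\mu,\max}$, and fixing the multiplicative constants inside the logarithm produces Eq.~\ref{eq:minexp}; the hypothesis $t\ge\frac{K-1}{A}\frac{\Delta_{a,\min}}{\Delta_{\mu,\max}}$ is precisely what keeps the logarithm's argument above $1$ and lets the lower-order terms be absorbed into the stated closed form. (Incidentally, the clean $t$-free tail $\exp(-mA\Delta_{a,k})$ already yields the stronger, $t$-independent bound $\mathbb{E}[n_{k,t}]\le 1+\frac{1}{A\Delta_{a,k}}$, which implies Eq.~\ref{eq:minexp} a fortiori, so the bounds as stated are somewhat conservative.)

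The entire probabilistic burden rests on Lemma~\ref{lem41} together with the union-bound idea of Lemma~\ref{firstprop}; the main obstacle is the bookkeeping of the second paragraph, that is, turning the informal implication ``arm $k$ gets pulled $\Rightarrow$ its running minimum exceeds $a^*$'' into a fully rigorous tail bound despite the random, data-dependent value of the pull count at the triggering step and the initialization phase (tie-breaking being harmless by continuity of the $\nu_i$), and then calibrating the union-bound budget and the truncation index $m_0$ so that the constants land exactly on the right-hand sides of Eqs.~\ref{regretboundprob} and~\ref{eq:minexp}.
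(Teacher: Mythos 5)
Your plan is correct and rests on the same two pillars as the paper's proof: the regret decomposition $\mathcal{R}_t=\sum_i n_{i,t}\Delta_{\mu,i}$, and the observation that the optimal arm's running minimum never falls below $a^*$ while a suboptimal arm's running minimum falls below $a^*$ exponentially fast (Lemma~\ref{lem41}). The difference is in the bookkeeping. The paper defines a single ``good event'' $\mathcal{E}$ requiring $\min_{s\le u}x_{i,s}-a_i\le \epsilon/u$ simultaneously for all arms $i$ and all sample counts $u\le t$, and the union bound over $u=1,\dots,t$ is what puts the factor $t$ inside the logarithm; on $\mathcal{E}$, a suboptimal pull forces $n_{i,t}\le \epsilon/\Delta_{a,i}+1$. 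You instead bound the tail of each pull count directly, $\mathbb{P}(n_{k,t}>m+1)\le \exp(-mA\Delta_{a,k})$, exploiting the monotonicity of the running minimum (so the ``data-dependent triggering step'' issue you worry about is harmless: the event $\{n_{k,t}>m+1\}$ is contained in the deterministic event $\{\min_{u\le m}x_{k,u}\ge a_k+\Delta_{a,k}\}$ on a pre-drawn i.i.d.\ sample array), and then insert the factor $t$ artificially to match Eq.~\ref{regretboundprob}. Your parenthetical remark is the real payoff of this repackaging: since the clean tail is $t$-free and summable, $\mathbb{E}[n_{k,t}]\le 1+\frac{1}{e^{A\Delta_{a,k}}-1}\le 1+\frac{1}{A\Delta_{a,k}}$ uniformly in $t$, so the expected cumulative regret of MIN is in fact bounded by the constant $(K-1)\Delta_{\mu,\max}\left(1+\frac{1}{A\Delta_{a,\min}}\right)$ under the stated assumptions --- strictly stronger than the logarithmic bound of Eq.~\ref{eq:minexp} (which it implies once the logarithm's argument exceeds $1$). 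One caveat applies equally to your argument and to the paper's: Lemma~\ref{lem41} is invoked with a deviation ($\Delta_{a,k}$ for you, $\epsilon/u$ with $\epsilon=\frac{1}{A}\log(tK/\delta)$ for the paper) that need not be small, so the hypothesis $\mathbb{P}(X\le a+\epsilon)\ge A\epsilon$ must be read as holding for all relevant $\epsilon$, not merely in an infinitesimal neighborhood of $a$; this is how the paper itself uses it.
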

\begin{proof}
Let us assume that there exists a single optimal arm (we shall return to this point below).
Taking inspiration from \cite{Sani2012}, let $x_{i,u}$ be independent samples drawn after $\nu_i$, and define the event set $\mathcal{E}$ as follows:
\begin{equation}
\label{eventsetdef}
\mathcal{E} = \{ \forall i \in \{ 1,\ldots,K\}, \forall u \in \{1,\ldots,t\}, \underset{1\leq s \leq u}{\min} x_{i,s} - a_i \leq  \frac{\epsilon}{u}  \}
\end{equation}
The probability of the complementary set $\mathcal{E}^c$ is bounded  after Lemma \ref{firstprop}:
\begin{eqnarray*}
\mathbb{P}(\mathcal{E}^c) & = & \mathbb{P}(\exists i \in \{ 1,\ldots,K\}, \exists u \in \{ 1,\ldots, t \},  \underset{1\leq s \leq u}{\min} x_{i,s} - a_i > \frac{\epsilon}{u}  )\\
& \leq & \sum_{u=1}^t \mathbb{P} (\exists i \in \{ 1,\ldots, K\}, \underset{1\leq s \leq u}{\min} x_{i,s} - a_i > \frac{\epsilon}{u} )\\
& \leq & \min(1, t K \exp(-A\epsilon))
\end{eqnarray*}

Let $t > 1$ be an iteration where a sub-optimal arm $i$ is selected; this implies that the empirical min of the $i$-th arm is higher than that of the best arm $i^*$:
\begin{eqnarray*}
 \underset{1\leq u \leq n_{i^*,t-1}}{\min} x_{i^*,u} <  \underset{1\leq u \leq n_{i,t-1}}{\min} x_{i,u} & \Leftrightarrow&\underbrace{\underset{1\leq u \leq n_{i^*,t-1}}{\min} x_{i^*,u} - a_i}_{\geq a_{i^*} - a_i = \Delta_{a,i}} <  \underbrace{\underset{1\leq u \leq n_{i,t-1}}{\min} x_{i,u} - a_i}_{\leq \frac{\epsilon}{n_{i,t-1} }(*)}\\
\end{eqnarray*}
where $(*)$ holds if $t$ belongs to the event set $\cal E$, thus with probability at least $1 - tK exp(-A\epsilon)$ after Lemma \ref{firstprop}.\\
It follows that with probability at least $1 - tK exp(-A\epsilon)$
\[ \frac{\epsilon}{n_{i,t-1}} \geq \Delta_{a,i}  \mbox{~hence~} n_{i,t} \leq \frac{\epsilon}{\Delta_{a,i}}  + 1\]
since $n_{i,t} \le  n_{i,t-1} + 1$. With probability at least $1 - tK exp(-A\epsilon)$, the cumulative regret $\tcr$ can thus be upper-bounded:
\begin{eqnarray}
\tcr 
 & = &  \sum_{i=1}^K n_{i,t} \Delta_{\mu,i} \leq  \sum_{i=1}^K (\frac{\epsilon}{\Delta_{a,i}} + 1) \Delta_{\mu,i} \label{eq:further}\\
 & \leq & (K-1) \left ( \frac{\Delta_{\mu,max}}{\Delta_{a,min}}\epsilon + \Delta_{\mu, \max}\right ) \mbox{ with } \Delta_{\mu,\max} = \underset{1\leq i \leq K}{\max} \Delta_{\mu,i} \mbox{~and~} \Delta_{a,\min} = \underset{1\leq i \leq K}{\min} \Delta_{a,i} \nonumber
\end{eqnarray}

Finally, by setting $\delta = \min(1,t K \exp(-A\epsilon))$, it follows that with probability $1 - \delta$, 
\begin{equation}
\tcr \leq \frac{K-1}{A} \frac{\Delta_{\mu,max}}{\Delta_{a,min}}\log(\frac{t K}{\delta})+  (K-1) \Delta_{\mu,\max} 
\label{prop43_1}
\end{equation}

In the case where there exists $k>1$ optimal arms, Eq. \ref{prop43_1} still holds, by replacing $K-1$ factor with $K-k$.

The expectation of the cumulative regret is similarly upper-bounded: 
\begin{eqnarray*}
\mathbb{E}[\tcr] &=& \mathbb{E}[\mathcal{R}_t \mathbb{I}_{\mathcal{E}}] +  \mathbb{E}[\mathcal{R}_t \mathbb{I}_{\mathcal{E}^c}]\\
			& \leq & \frac{K-1}{A} \frac{\Delta_{\mu,max}}{\Delta_{a,min}}\log(\frac{t K}{\delta})+  (K-1) \Delta_{\mu,\max} + \delta t \mbox { by bounding }\tcr\ \mbox{ by } t \mbox{ over } \mathcal{E}^C. \\
\end{eqnarray*}

For $t$ sufficiently large ($t \geq \frac{K-1}{A}\frac{\Delta_{\mu,max}}{\Delta_{a,min}} $), by setting $\delta=\frac{K-1}{ t A} \frac{\Delta_{\mu,max}}{\Delta_{a,min}} $, it comes :  

\begin{equation}
\mathbb{E}[\tcr] \leq \frac{K-1}{A} \frac{\Delta_{\mu,max}}{\Delta_{a,min}} \left(\log\left(\frac{t^2KA}{(K-1)}\frac{\Delta_{a,\min}}{\Delta_{\mu,\max}}\right)+1\right) +(K-1) \Delta_{\mu,\max}
\end{equation}
which concludes the proof.
\end{proof}

{\noindent\bf Remark}. UCB similarly achieves a logarithmic regret  \citep{Auer:2002}:  
\begin{equation}\label{eq:ucbexp}
\mathbb{E}[\mathcal{R}_t] \leq 8 \sum_{i\neq i^*} \frac{\log t}{\Delta_{\mu,i}} + (1 + \frac{\pi^2}{3})\sum_{i=1}^K \Delta_{\mu,i}
\end{equation}
where $i^*$ stands for the index of the optimal arm. MIN and UCB thus both achieve a logarithmic regret uniformly over $t$, where the regret rate involves the mean-related margin in UCB (resp. the min-related margin in MIN, multiplied by the lower-bound constant $A$ on the density in the neighborhood of the minimum). 

A stronger result can be obtained for MIN, under an additional assumption
on the lower tails of the arm distributions. 

\begin{prop}\label{prop44}
With same notations and assumptions as in Prop. \ref{prop43}, let us further assume that for every $i = 1\ldots K, \Delta_{\mu,i} = \mu^* - \mu_i \leq a^* - a_i = \Delta_{a,i}$.\\

Then, with probability at least $1-\delta$,
\begin{equation*}
\tcr \leq \frac{K-1}{A} \log(\frac{tK}{\delta})+  (K-1) \Delta_{\mu,\max} 
\end{equation*}
with $\Delta_{\mu,\max} = \underset{i}{\max} \, \Delta_{\mu,i}$.

Furthermore, if $t > \frac{K-1}{A}$, the expectation of $\tcr$ is upper-bounded as follows :
\begin{equation}
\label{eq:minexp2}
\mathbb{E}[\tcr] \leq \frac{K-1}{A} \left(\log\left(\frac{t^2KA}{K-1}\right)+1\right) +(K-1) \Delta_{\mu,\max}
\end{equation}
\end{prop}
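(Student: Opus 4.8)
The plan is to adapt the proof of Proposition~\ref{prop43} almost verbatim, inserting the new hypothesis $\Delta_{\mu,i}\le\Delta_{a,i}$ at the single place where it simplifies the bound. Concretely, I would reopen the argument at Eq.~\ref{eq:further}, where with probability at least $1-tK\exp(-A\epsilon)$ one has $n_{i,t}\le \epsilon/\Delta_{a,i}+1$ for every suboptimal arm $i$, and hence
\[
\tcr=\sum_{i=1}^K n_{i,t}\Delta_{\mu,i}\ \le\ \sum_{i\neq i^*}\Bigl(\frac{\epsilon}{\Delta_{a,i}}+1\Bigr)\Delta_{\mu,i}.
\]
The key observation is that the troublesome ratio $\Delta_{\mu,i}/\Delta_{a,i}$, which in Prop.~\ref{prop43} was bounded crudely by $\Delta_{\mu,\max}/\Delta_{a,\min}$, is now bounded by $1$ directly, arm by arm, thanks to the assumption. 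Thus the first sum is at most $(K-1)\epsilon$, and the second sum is at most $(K-1)\Delta_{\mu,\max}$ as before, giving $\tcr\le (K-1)(\epsilon+\Delta_{\mu,\max})$ on the good event.

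Next I would close the high-probability statement exactly as in Prop.~\ref{prop43}: set $\delta=\min(1,tK\exp(-A\epsilon))$, so that $\epsilon=\frac{1}{A}\log(tK/\delta)$, and substitute to obtain
\[
\tcr\ \le\ \frac{K-1}{A}\log\Bigl(\frac{tK}{\delta}\Bigr)+(K-1)\Delta_{\mu,\max}
\]
with probability $1-\delta$, which is the first displayed inequality. The case of several optimal arms is handled by the same $K-k$ replacement already noted in the earlier proof, so I would just cite that remark rather than repeat it.

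For the expectation bound I would reuse the decomposition $\mathbb{E}[\tcr]=\mathbb{E}[\tcr\mathbb{I}_{\mathcal{E}}]+\mathbb{E}[\tcr\mathbb{I}_{\mathcal{E}^c}]$, bounding $\tcr$ on $\mathcal{E}^c$ by the trivial $t$ (since all per-step regrets are at most $1$), which contributes $\delta t$. This yields
\[
\mathbb{E}[\tcr]\ \le\ \frac{K-1}{A}\log\Bigl(\frac{tK}{\delta}\Bigr)+(K-1)\Delta_{\mu,\max}+\delta t.
\]
Choosing $\delta=\frac{K-1}{tA}$ (the analogue of the earlier choice with the ratio set to $1$) makes the last term equal to $(K-1)/A$, which I fold into the logarithmic term by writing it as $(K-1)/A\cdot 1$; the condition $t>\frac{K-1}{A}$ is exactly what guarantees $\delta<1$ so that the $\min$ in the definition of $\mathbb{P}(\mathcal{E}^c)$ is attained by the exponential term. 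Substituting $\delta$ into $\log(tK/\delta)$ gives $\log\bigl(t^2KA/(K-1)\bigr)$, producing Eq.~\ref{eq:minexp2}.

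I do not expect a genuine obstacle here: the whole point of the proposition is that the extra tail assumption collapses the factor $\Delta_{\mu,\max}/\Delta_{a,\min}\ge 1$ down to $1$, and every other estimate is inherited unchanged from Prop.~\ref{prop43}. The only mild care-points are (i) making sure the arm-wise inequality $\Delta_{\mu,i}/\Delta_{a,i}\le 1$ is applied \emph{inside} the sum before any maximization, rather than after, and (ii) checking that the threshold $t>\frac{K-1}{A}$ stated in the proposition really is the condition under which the chosen $\delta$ lies in $(0,1)$ and under which folding $\delta t=(K-1)/A$ into the bracket $(\log(\cdots)+1)$ is legitimate. Both are routine to verify.
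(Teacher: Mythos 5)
Your proposal is correct and follows essentially the same route as the paper: the paper's proof of Prop.~\ref{prop44} likewise just revisits Eq.~\ref{eq:further} of Prop.~\ref{prop43}, replaces the ratio $\Delta_{\mu,i}/\Delta_{a,i}$ by $1$ arm by arm using the new hypothesis, and sets $\delta=\frac{K-1}{tA}$ for the expectation bound. Your additional care-points (applying the inequality inside the sum, and checking $t>\frac{K-1}{A}$ gives $\delta<1$) are exactly the right checks and both go through.
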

\begin{proof}
The proof closely follows the one of Prop. \ref{prop43}, noting that in Eq. \ref{eq:further} $\Delta_{a,i}$ is now greater than $\Delta_{\mu,i}$. Setting $\delta = \frac{(K-1)}{tA}$ concludes the proof of Eq. 
\ref{eq:minexp2}.
\end{proof}

{\noindent\bf Discussion}. The comparison of Eq. \ref{eq:minexp2} and 
Eq. \ref{eq:ucbexp} suggests that MIN might outperform UCB in the case where  margins $\Delta_{\mu,i}$ are small, where distributions $\nu_i$ are not too thin in the neighborhood of the essential infimum (that is, $A$ is not too small), and the assumption $\Delta_{a,i} \geq \Delta_{\mu,i}$ holds. \\
Note that the latter assumption boils down to considering that better arms (in the sense of their mean) also have a narrower support for their 
lower tail, thus a lower risk. If this assumption does not hold however, then risk minimization and regret minimization are likely to be conflicting objectives. 

A last remark is that the assumptions done (lower bounded distribution density in the neighborhood of the essential minimum and mean-related margin greater than the minimum-related margin) yield a significant improvement compared to the continuous distribution-free case, where the optimal regret is known to be $O(\sqrt{ t})$ \citep{COLT09a,JMLR10}.

\section{Experimental validation}\label{sec:expval}
As proof of concept, UCB, MIN and \XX\ are first compared on favorable cases, using a problem generator satisfying the assumptions done in Prop \ref{prop44}. 
A general empirical validation follows, assessing MIN and \XX\ comparatively to UCB and to the risk-aware \MV\ and \Exp\ algorithms \citep{Sani2012}. Artificial problem instances are generated using a relaxed problem generator, which only satisfies the assumption of lower-bounded densities in the neighborhood of their minimum 
 (section \ref{sec:artexp}). A simplified real-world problem in the target application domain of energy management is also considered (section \ref{sec:energy}). 
The goal of experiments is to answer three questions. The first one is
the price to pay in terms of performance loss for a risk-aware behavior, 
and how the cumulative regret increases with the number of iterations, specifically focussing on short time horizons (unless explicitly specified, the empirical cumulative regret is considered).
The second question regards the robustness of the algorithms, and their sensitivity w.r.t. parameters. A third question is whether \XX, \MV\ and \Exp\ do avoid exploring risky arms; 
this question is investigated by inspecting the low tail of the gathered rewards.  

The number $K$ of arms is set to 20. The time horizon is set to $T = K \times 100$ and $T = K \times 200$. For all problems, all results over (respectively the average result out of) 40 runs are displayed.

\subsection{Proof of concept}\label{sec:toyex}
An ad-hoc problem generator satisfying the assumptions done in Prop. \ref{prop44} is used to compare MIN, UCB and \XX\ in the favorable case. 
\begin{figure}[htbp]
\begin{center}
 \begin{tabular}{cc}
 \includegraphics[width=.5\textwidth]{./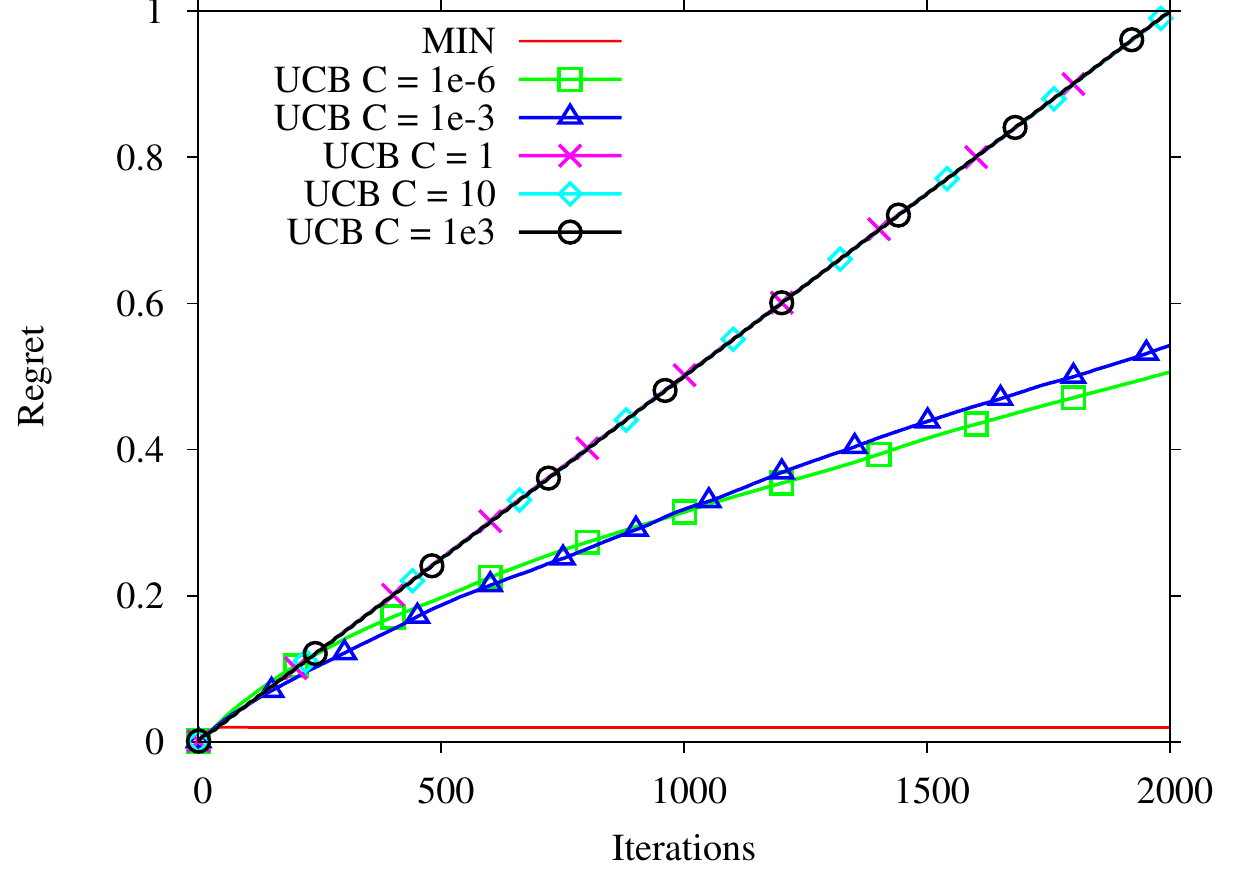} & 
 \includegraphics[width=.5\textwidth]{./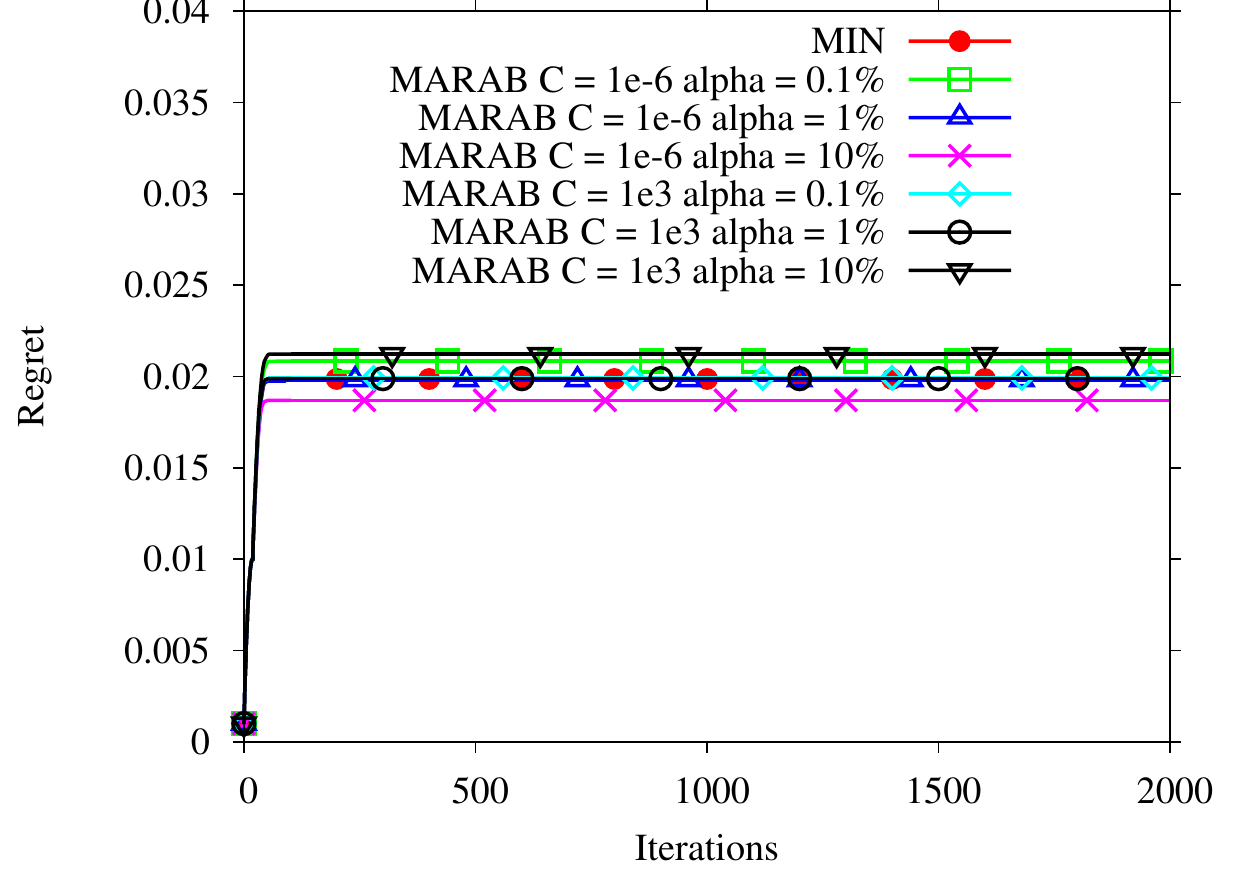}\\
(a) MIN and UCB  & (b) MIN and \XX\ \\
$C = 10^{i}, ~i=-6\ldots 3$& $C = 10^{i}, ~i=-6\ldots 3$, $\alpha = .1 \%, 1\%, 10\%$) \\
 \end{tabular}
\end{center}
\caption{\label{figtoyex} Theoretical cumulative regret of UCB, MIN and \XX\ under the assumptions of Prop. \ref{prop44},  averaged out of 40 runs. Parameter $C$ ranges in $\{10^{i}, i = -6 \ldots 3\}$. Risk quantile level $\alpha$ ranges from .1\% to 10\%.\\
Left: UCB regret increases logarithmically 
 with the number of iterations for well-tuned $C$; MIN identifies the best arm after 50 iterations and its regret is constant thereafter.
Right: zoom on the lower region of Left, with MIN and \XX\ regrets; \XX\ regret is close to that of MIN, irrespective of the $C$ and $\alpha$ values in the considered ranges. }
\end{figure}

Each problem involves 20 arms. 
The $i$-th arm distribution $\nu_i$ is set to a uniform distribution on a segment in $[0,1]$, centered on $\mu_i$ with radius $r_i$ ($\nu_i = \mathcal{U}([\mu_i - r_i, \mu_i + r_i])$). Mean $\mu_i$ (respectively radius $r_i$) decreases (resp. increases ) with $i$. The mean-related and minimum-related margins are respectively controlled from two generative parameters\footnote{With $\Delta_{max}$ and $r_{max}$ two generative parameters, $\mu_i$ is a decreasing affine function of $i$, 
$\mu_i = \mu^* - \frac{i -1}{(K-1)} \, \Delta_{\max}$. \\
$r_i$ is an increasing affine function of $i$, with $r_1 = \mu^* - a^*$ and $r_i = r_1 + \frac{i -1}{K-1} \, r_{\max}$.\\
The mean-related margin $\Delta_{\mu,i}$ is thus controlled from $\Delta_{max}$; the min-related margin $\Delta_{a,i}$ is controlled from $\Delta_{max}$ and $r_{max}$, in such a way that $\Delta_{a,i} > \Delta_{\mu,i}$.
}.
The theoretical cumulative regrets of UCB, MIN and \XX{} are displayed in Fig. \ref{figtoyex} (averaged out of 40 independent runs with $\mu^* = 0.5, a^* = \mu^* - 10^{-3}$ and maximal radius $0.5$). Parameter $C$ of UCB and \XX\ 
ranges in $\{10^{i}, i = -6 \ldots 3\}$ and the risk level $\alpha$ ranges 
from .1\% to 10\%. 
By construction, this artificial problem favors MIN against UCB; firstly it satisfies the assumptions of Prop. \ref{prop44}; moreover since distributions $\nu_i$ are uniform, $A \geq 1$.
In this easy setting, MIN catches the best arm after 50 iterations
and yields a constant regret thereafter (no exploration). \XX\ features the same behavior for a wide range of values of $C$ and $\alpha$; its very low sensitivity w.r.t. $C$ slightly increases for high values of $\alpha$ ($\alpha > 20\%$). 
The disappointing UCB performance is blamed on the high variance of the worse arms, slowing down the accurate estimation of their mean.

\subsection{Artificial problems}\label{sec:artexp}
A second problem generator is considered, which only satisfies the assumption of a lower-bounded density in the neighborhood of the minimum (Eq. \ref{eq:lowerboundA}). Specifically, each problem involves  20 arms. 
The $i$-th arm distribution $\nu_i$ is set to a mixture of truncated Gaussians: i) its minimum $a_i$ is uniformly drawn in $[0, .05]$; ii) $n_i$ Gaussians are defined where $n_i$ is uniformly drawn in $1 \ldots 4$; for $j = 1\ldots n_i$  the $j$-th Gaussian ${\cal N}(\mu_{i,j},\sigma_{i,j})$, is defined by uniformly sampling $\mu_{i,j}$ in $[0,1]$ and $\sigma_{i,j}$  in $[.12, .5]$; furthermore, the $j$-th Gaussian is
 associated a probability $p_{i,j}$  such that $\sum_j p_{i,j}=1$.
Upon selecting the $i$-th arm, the reward is drawn by: i) selecting the $j$-th Gaussian with probability $p_{i,j}$; ii) drawing a reward $r$ from ${\cal N}(\mu_{i,j}, \sigma_{i,j})$; iii) going to i) if $r$ is not in the $[a_i,1]$ interval (rejection-based truncation).

\subsubsection{Cumulative regrets}
The empirical cumulative regrets of UCB, \XX, \MV\ and \Exp\ are displayed in Fig. \ref{bestOptions}, reporting the empirical cdf\footnote{For each algorithm the cumulative regrets $R[i], i = 1\ldots 1,000$ over the 1,000 problem instances are independently sorted and the curve $(i,R[\sigma(i)])$ is displayed.}  of the regrets over 1,000  problem instances for short time (Fig. \ref{bestOptions}.(a)) and medium time (Fig. \ref{bestOptions}.(b)) horizons. All algorithm parameters are set to their best value after preliminary experiments. 
UCB yields the best cumulative regret overall whenever $C$ is well tuned. 
\XX\ suffers an extra regret compared to UCB; this extra regret is bounded 
in the considered experimental setting, and it seemingly does not increase as the time horizon increases. As could have been expected this extra regret decreases as $\alpha$ increases and the selection rule involves a better estimation of the empirical means. Interestingly, \XX\ shows a very low sensitivity w.r.t. $C$.  

\MV\ yields the worst regret of all strategies, with a very low sensitivity w.r.t. parameter $\rho$ on the considered problems. 
\Exp\ significantly improves on \MV\ with probability circa 90\%; it even improves on UCB with probability 10\% (circa 20\% for medium time horizon).
\Exp\ yields very good results; the fact that it does never get very low 
cumulative regret is explained from its initial exploratory phase; a caveat is that its optimal setting used in the experiments requires the time horizon to be known in advance. 
\XX\ improves on \Exp\ with probability 70\%, albeit with maximal cumulative regrets (over the problem instances) higher than for \Exp. 

Overall, \XX\ with risk level $\alpha = 20\%$ and untuned $C$ value
yields results slightly less than UCB with tuned $C$, for both short and medium time horizons. The risk-aware \XX\ suffers a low regret increase compared to risk-neutral UCB, with a very low sensitivity w.r.t. $C$. Interestingly, a twice longer time horizon does not modify the performance order of the algorithms. 
\begin{figure}[htbp]
\begin{center}
\begin{tabular}{cc}
\includegraphics[width=.4\textwidth]{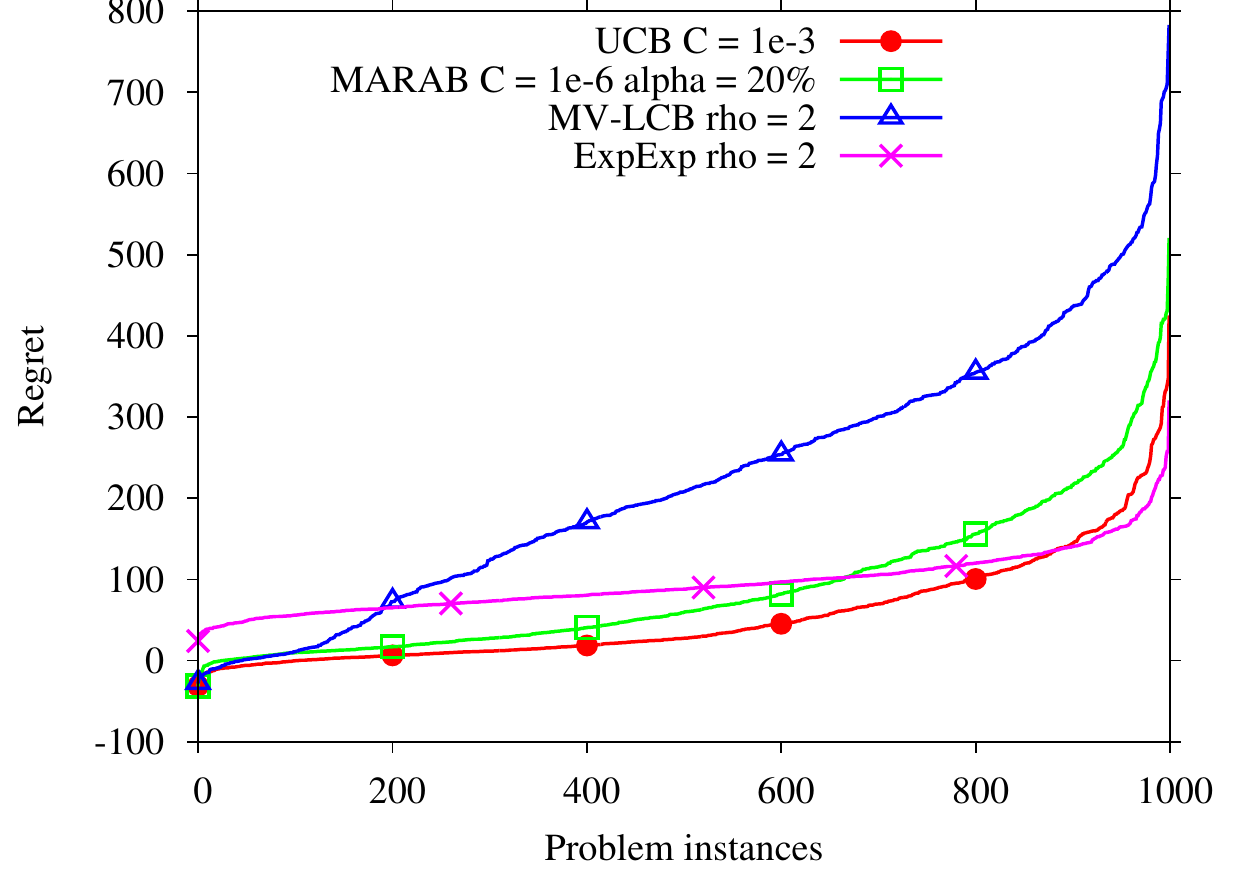} & \includegraphics[width=.4\textwidth]{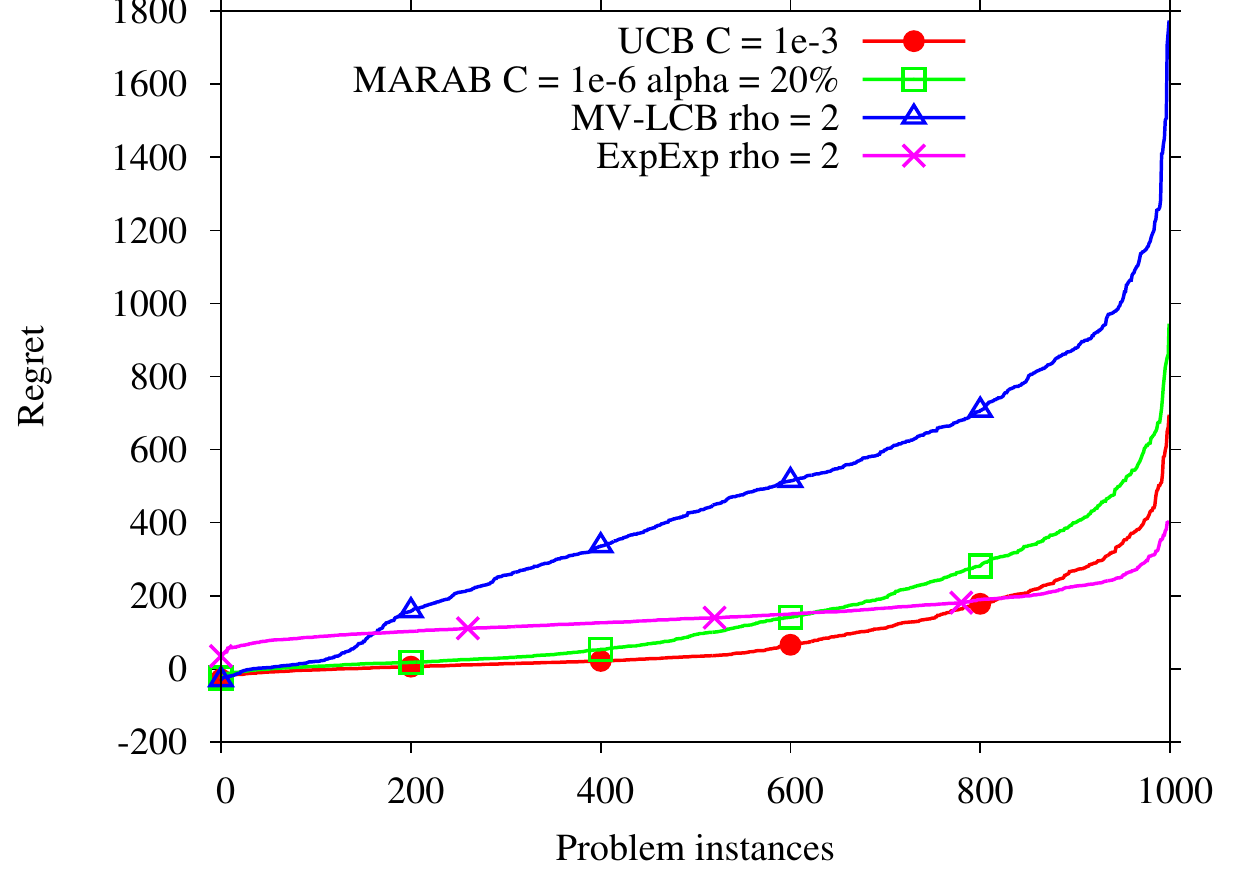}\\
(a) Time horizon 2,000 & (b) Time horizon 4,000\\
\end{tabular}
\end{center}
\caption{Empirical cumulative regret of UCB, \XX, \MV\ and \Exp\ on 1,000 problem instances (independently sorted for each algorithm) over short and medium time horizons. All algorithms are used with tuned parameters ($C = 10^{-3}, \alpha= 20\%, \rho = 2, \delta = \frac{1}{T^2}, \tau = K(\frac{T}{14})^{2/3}$). }
\label{bestOptions}
\end{figure}

\subsubsection{Risk Awareness} 
The effective risk avoidance of UCB, \MV, \Exp\ and \XX\ are investigated 
by inspecting the empirical cdf\footnote{For each algorithm the rewards $\bar{r}_t$ averaged out of 40 runs with time horizon $T=2,000$ are sorted by increasing value and the curve $(t, \bar{r}_{\sigma(t)})$ is displayed.} of the instant rewards on two representative artificial problems, with respectively low (Fig. \ref{empRewards}, left) and high  (Fig. \ref{empRewards}, right) variance of the best arm. 
\begin{figure}[htbp]
\begin{center}
\begin{tabular}{cc}
\includegraphics[width=.4\textwidth]{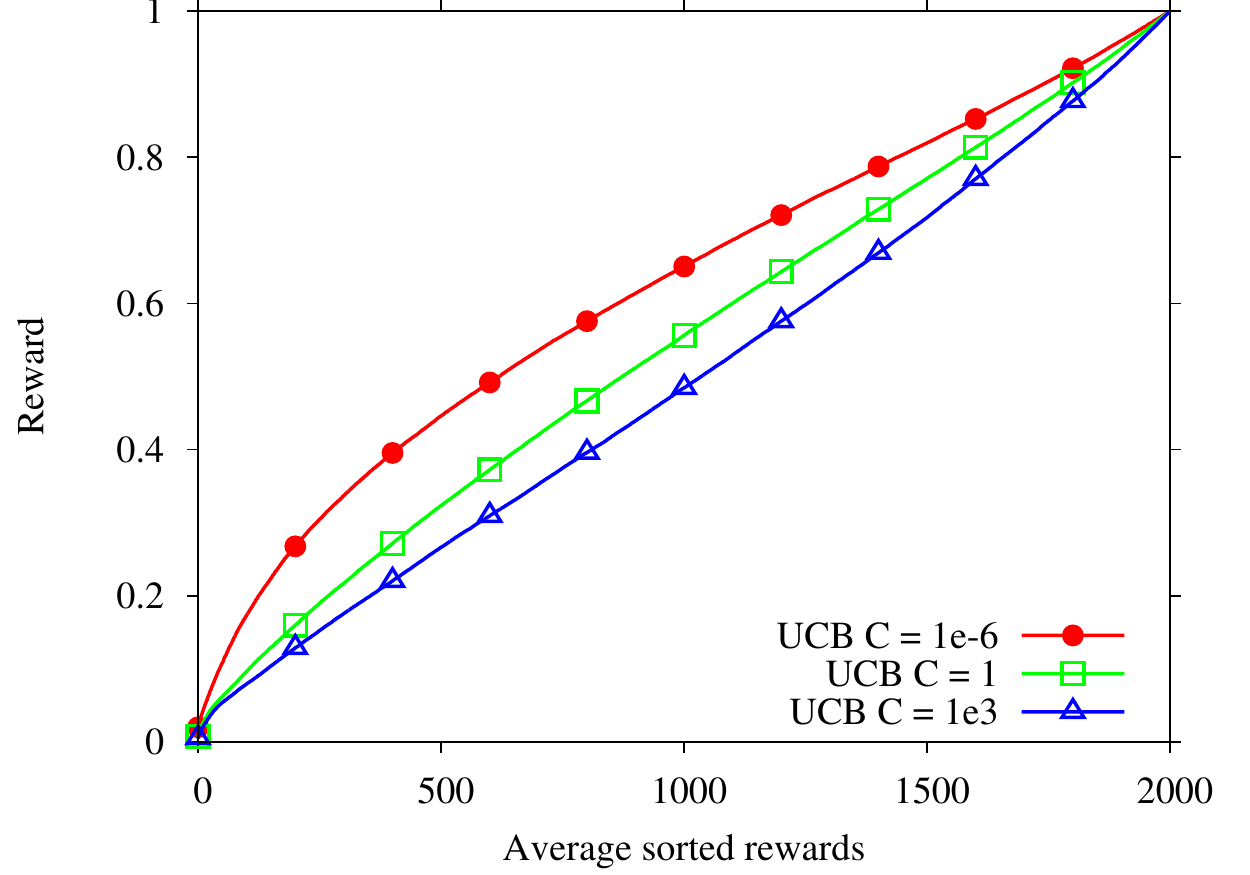} & \includegraphics[width=.4\textwidth]{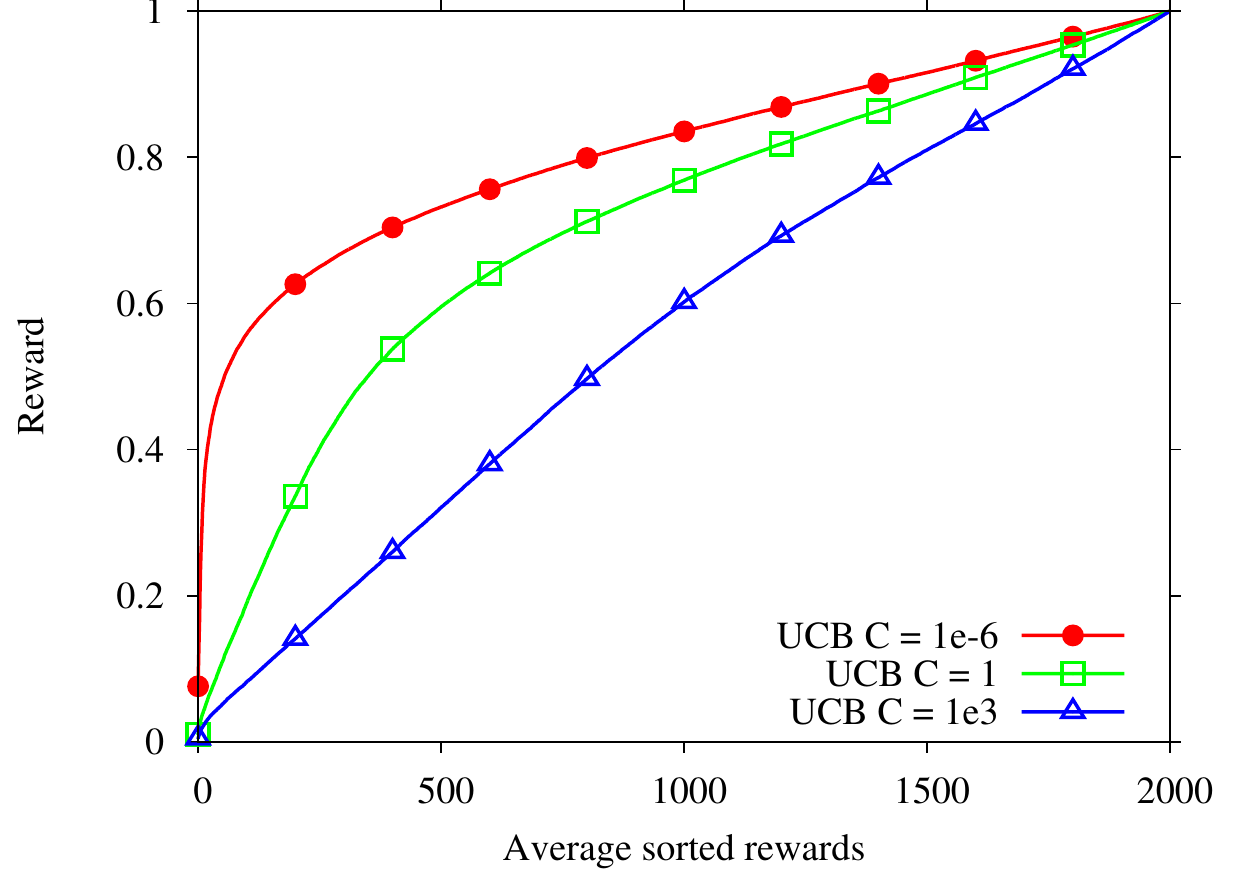}\\
\multicolumn{2}{c}{UCB with $C = 10^{i}, i = -6, 0, 3 $}\\
\includegraphics[width=.4\textwidth]{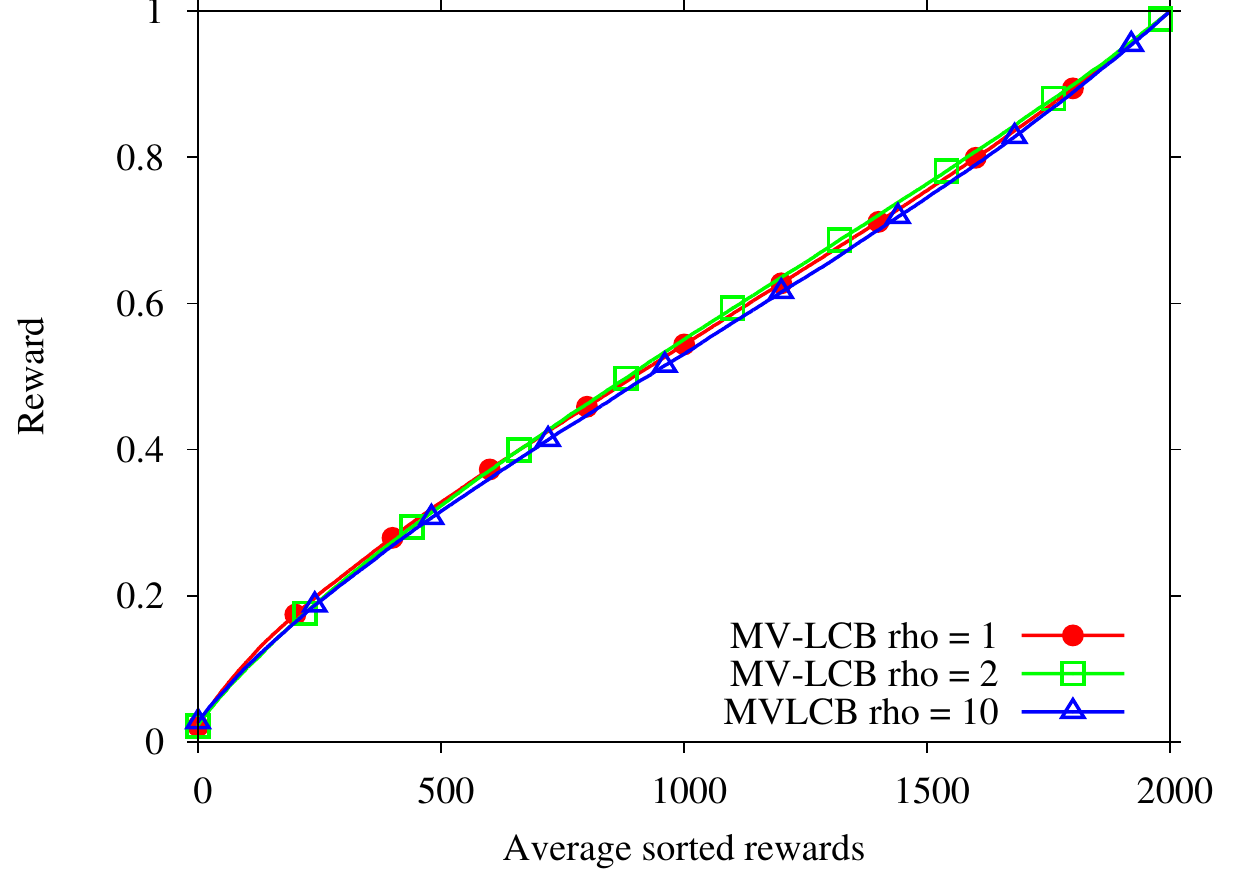} & \includegraphics[width=.4\textwidth]{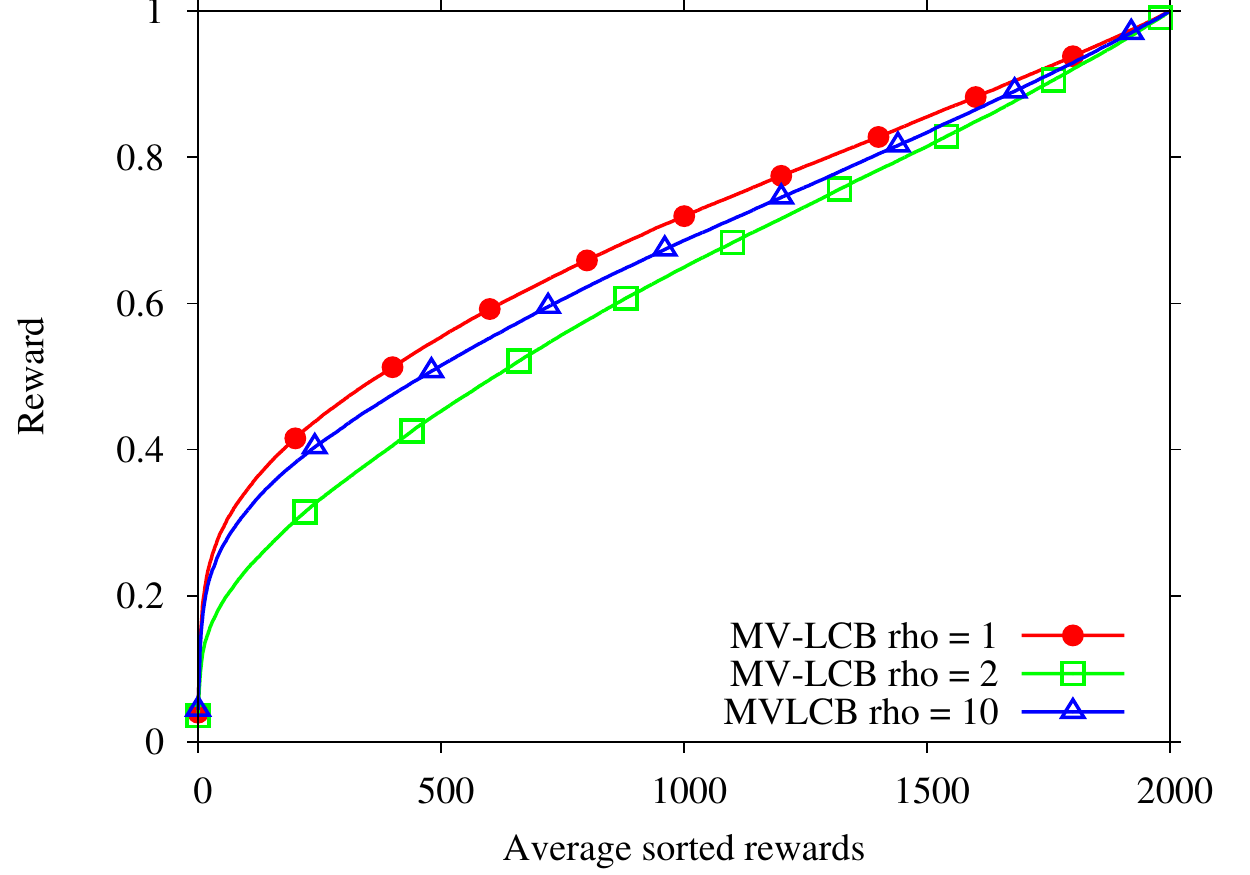}\\
\multicolumn{2}{c}{\MV\ with  $\rho \in \{ 1,2,10 \}, \delta = \frac{1}{T^2}$ }\\
\includegraphics[width=.4\textwidth]{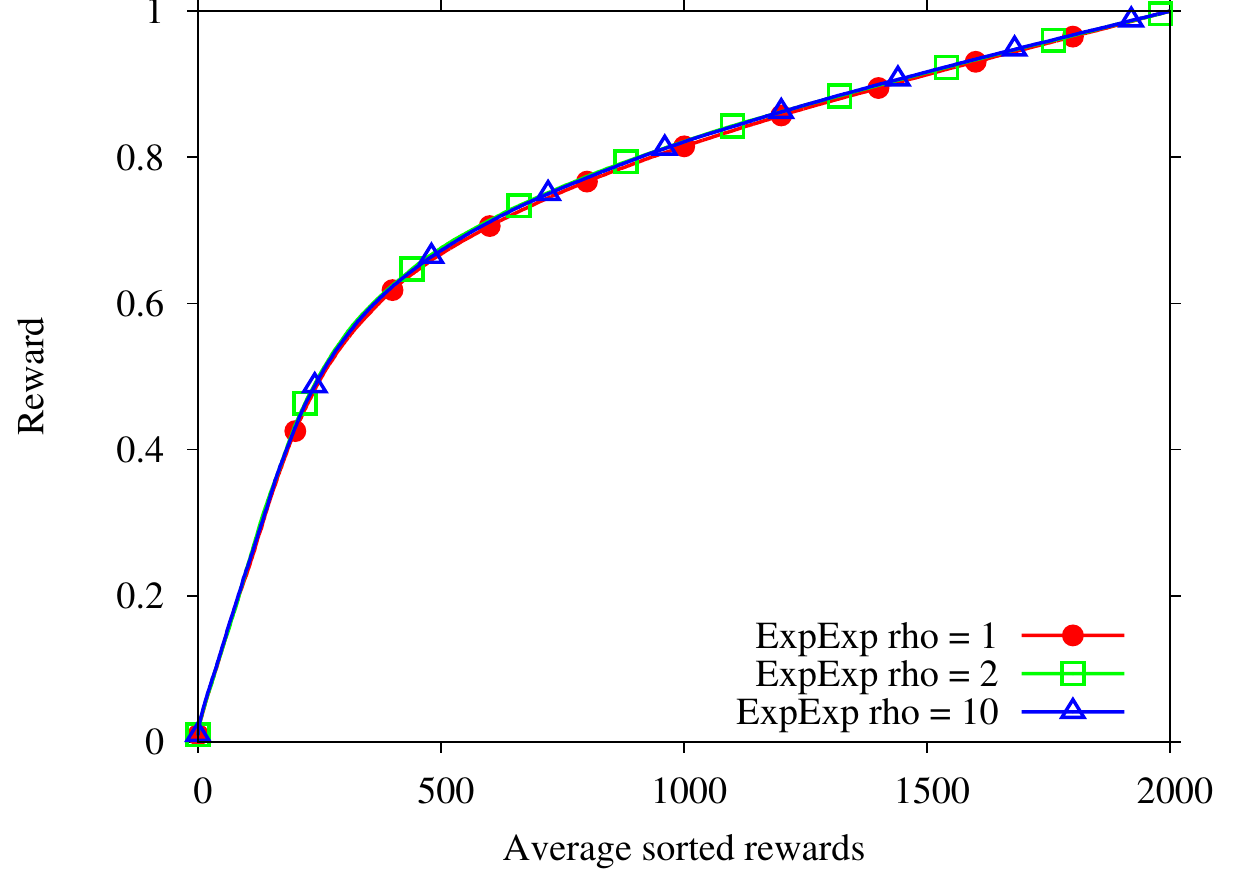} & \includegraphics[width=.4\textwidth]{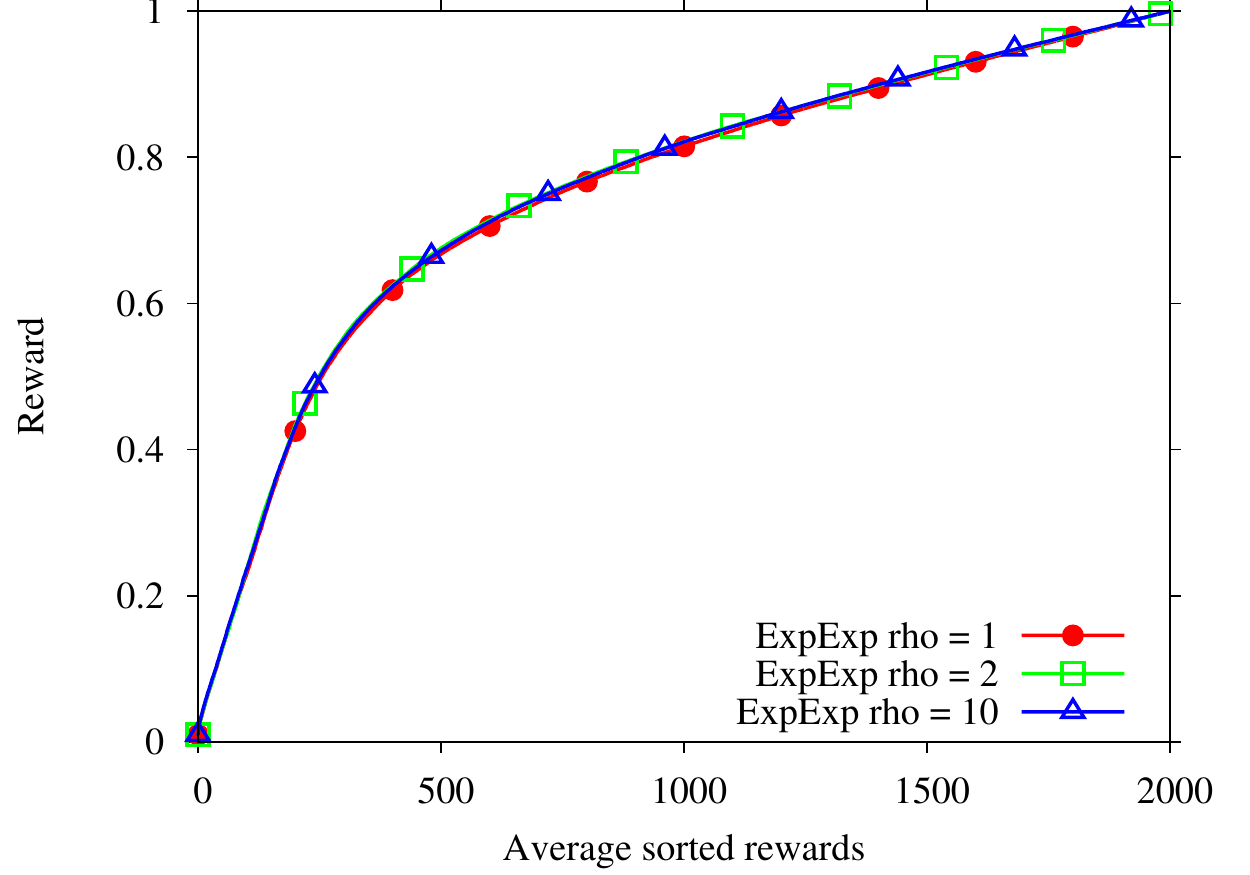}\\
\multicolumn{2}{c}{\Exp\ with $\rho \in \{ 1,2,10 \}, \tau = K(\frac{T}{14})^{2/3}$ }\\
\includegraphics[width=.4\textwidth]{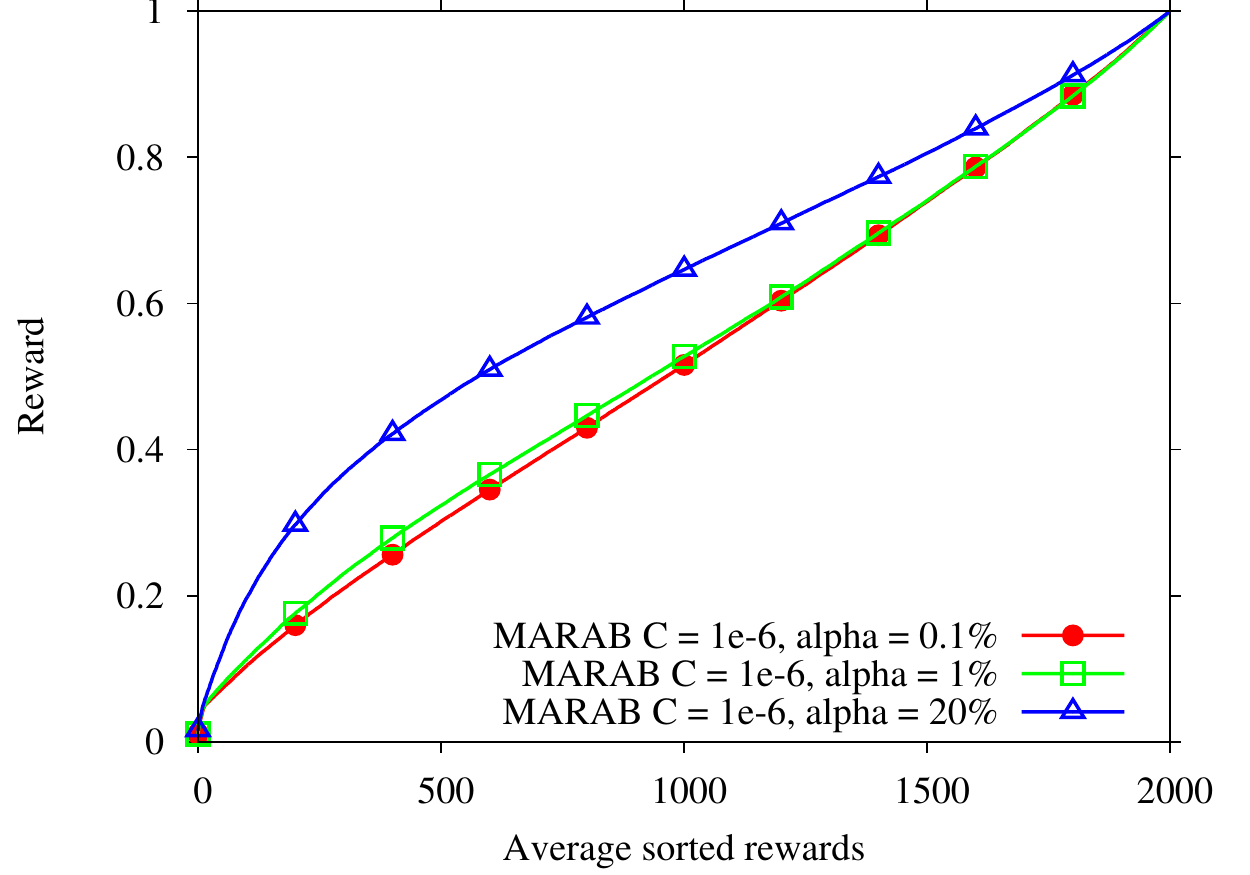} & \includegraphics[width=.4\textwidth]{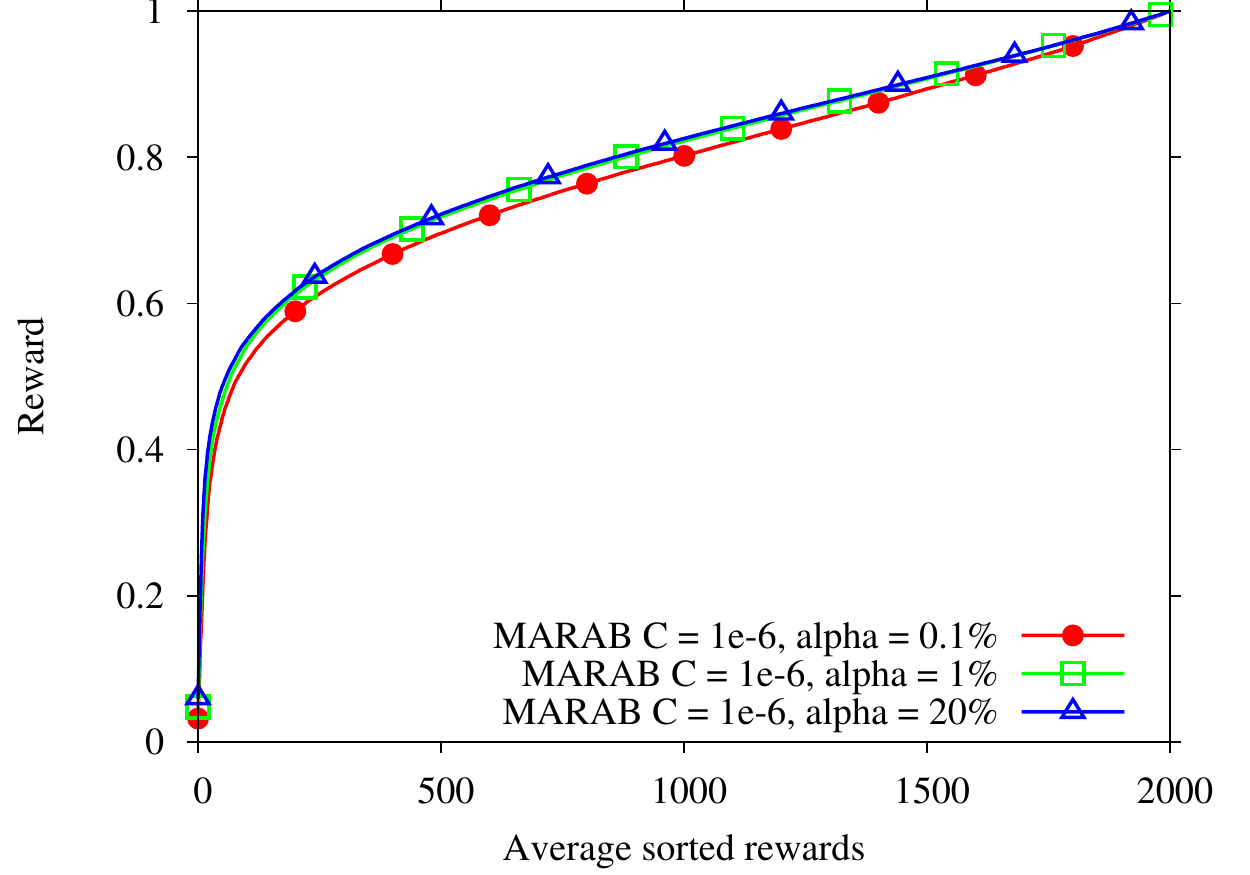}\\
\multicolumn{2}{c}{\XX{} with $C = 10^{-6}, \alpha \in \{ 0.1\%, 1\%, 20\% \}$ }\\
\end{tabular}
\end{center}
\caption{Comparative risk avoidance of UCB, \MV, \Exp\ and \XX\ on two representative artificial problems with low (left) and high (right) variance of the optimal arm. For each algorithm instant rewards averaged out of 40 runs are sorted. The time horizon is set to $T=2,000$.\label{empRewards}}
\end{figure}
The low tail of the cdf (worst average rewards gathered by the algorithm) indicates whether the algorithm actually tried poor arms.
Fig. \ref{empRewards} confirms previous results:
The noted sensitivity of UCB w.r.t. parameter $C$ unsurprisingly increases with the variance of the best arm (Fig. \ref{empRewards}, top row). 
The bad performance of 
\MV\ is confirmed; its sensitivity w.r.t. $\rho$ is low on the low variance problem as expected (Fig. \ref{empRewards}, second row, left); its sensitivity w.r.t. $\rho$ is much higher on the high variance problem (Fig. \ref{empRewards}, second row, right), with a best performance for medium values of $\rho$. \Exp\ features an excellent risk avoidance as the risky trials only take place during the exploratory phase (Fig. \ref{empRewards}, third row). 
The general robustness of \XX\ w.r.t. $C$ is confirmed; moreover, its robustness w.r.t. the risk level $\alpha$ on high variance problems is empirically shown (Fig. \ref{empRewards}, bottom row). It is seen that 
for low to medium risk ($\alpha < 20\%$), the quantile values $v_\alpha$ (section \ref{sec:defcvar}) are consistently 
higher for \XX\ than for \Exp, which is explained again from the systematic exploratory phase in \Exp. 

\subsection{Optimal energy management}\label{sec:energy}
The real-world problem motivating the presented approach is a battery management problem, where the environment is described by the  energy demand and the energy cost in each time step. The decision to be taken in each time step is a real-value $x$, determining how much energy is either used from the battery (if $x>0$) or stored in the battery (if $x < 0$). In each time step, one must meet the demand by buying $\min (0, \mbox{demand} - x)$ energy; the instant reward is the cost of the bought energy if the demand exceeds the available energy. Additionally, the battery loses some energy in each time step. 
A simplified setting is considered, where i) the energy cost is constant, the random process only dictates the energy demand in each time step; ii) 20 arms, corresponding to pre-defined strategies are considered. The strategy reward is drawn by uniform sampling with replacement from the 117 available realizations of the strategy. 

Same general trends as for the artificial problems are observed on this
real-world problem (Fig. \ref{enRewards}): i) The cumulative regret is minimal for UCB with optimally tuned $C$; ii) \MV\ is dominated by all other algorithms w.r.t. both risk avoidance and cumulative regret; iii) 
the \Exp\ regret increases linearly during the exploration phase and then reaches a plateau; iv)  \XX\ shows its good risk-avoidance ability regardless of the $C$ value, and MIN yields same results. Overall, \XX\ suffers a slight regret increase compared to UCB at its best, with a slightly better reward cdf in the region of low rewards. 

\begin{figure}[htbp]
\begin{center}
\begin{tabular}{cc}
\includegraphics[width=.4\textwidth]{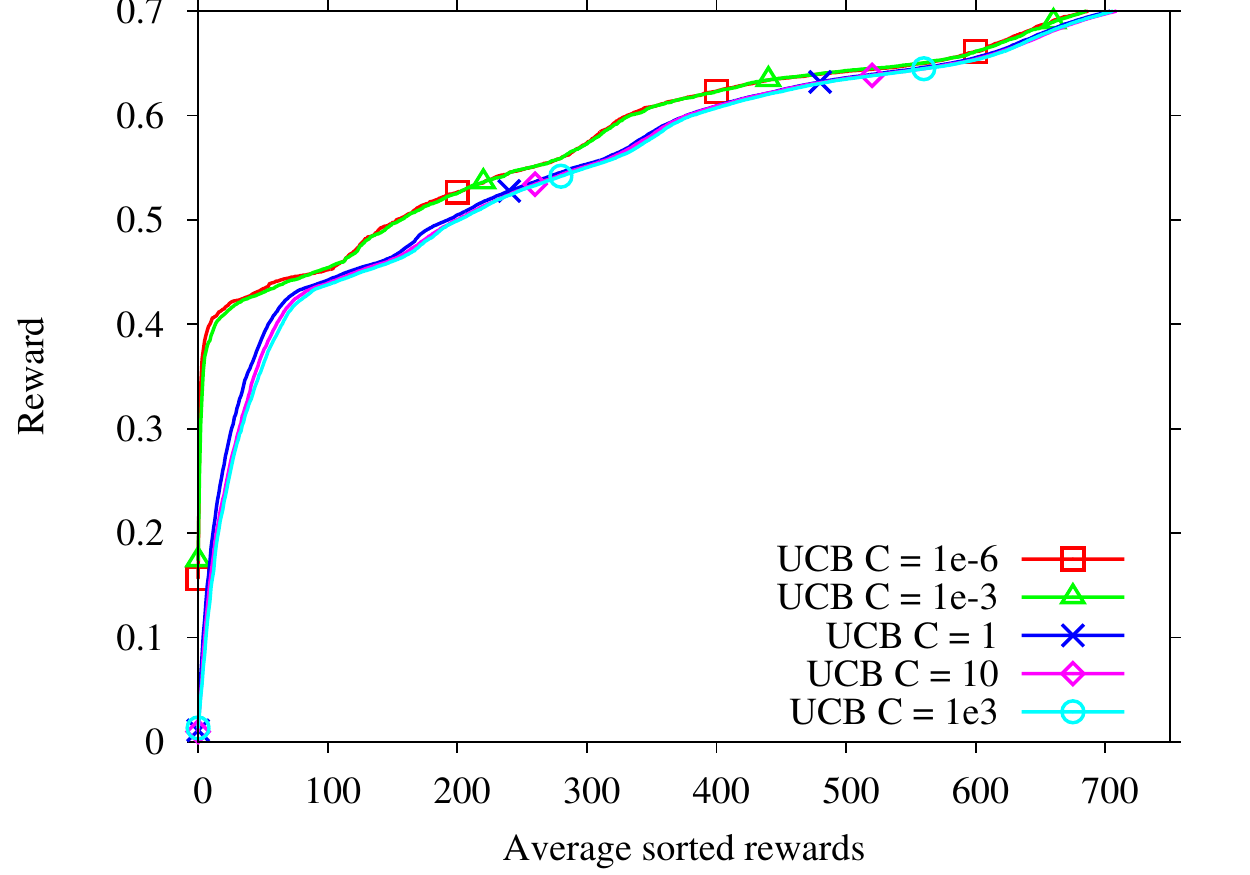} & \includegraphics[width=.4\textwidth]{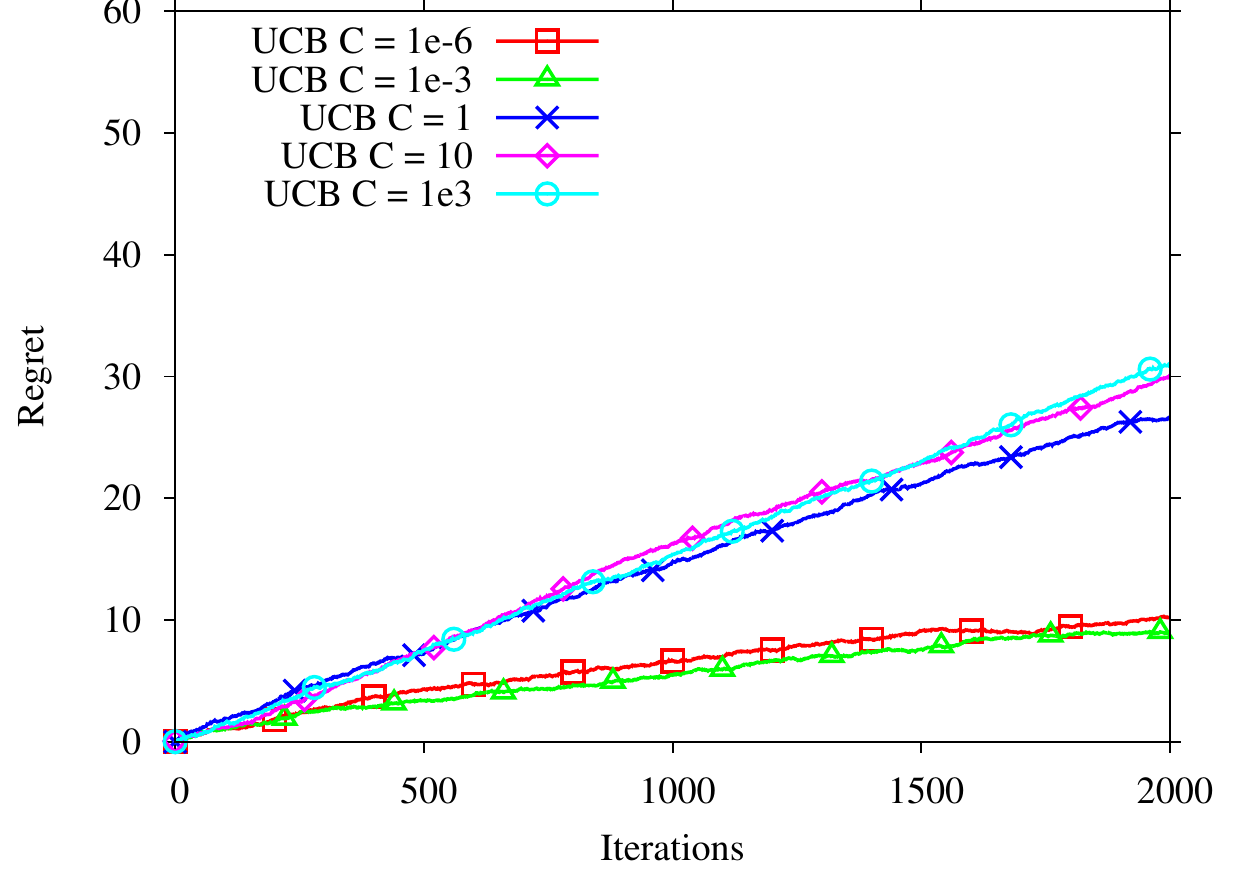}\\
\multicolumn{2}{c}{UCB. $C \in \{ 10^{-6}, 10^{-3}, 1, 10, 10^3 \}$}\\
\includegraphics[width=.4\textwidth]{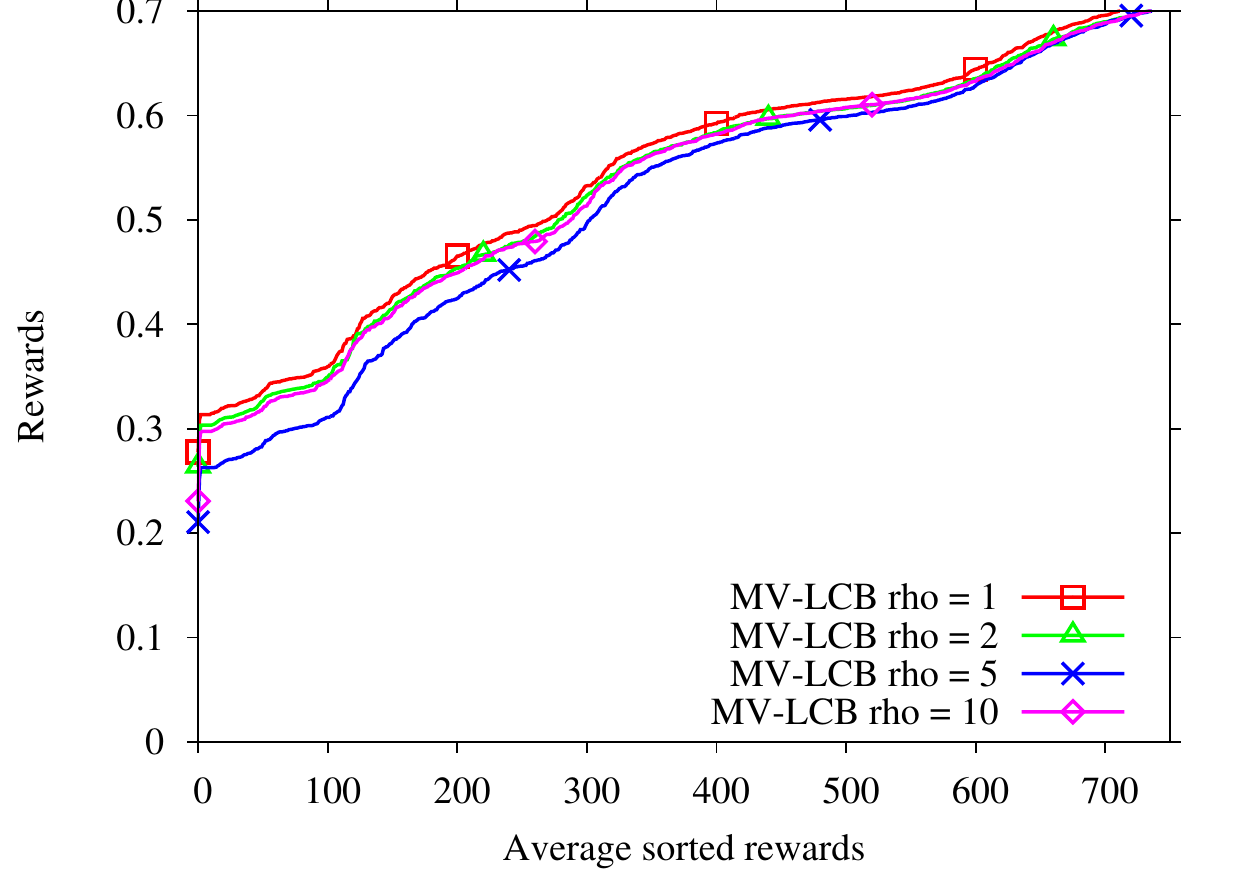} & \includegraphics[width=.4\textwidth]{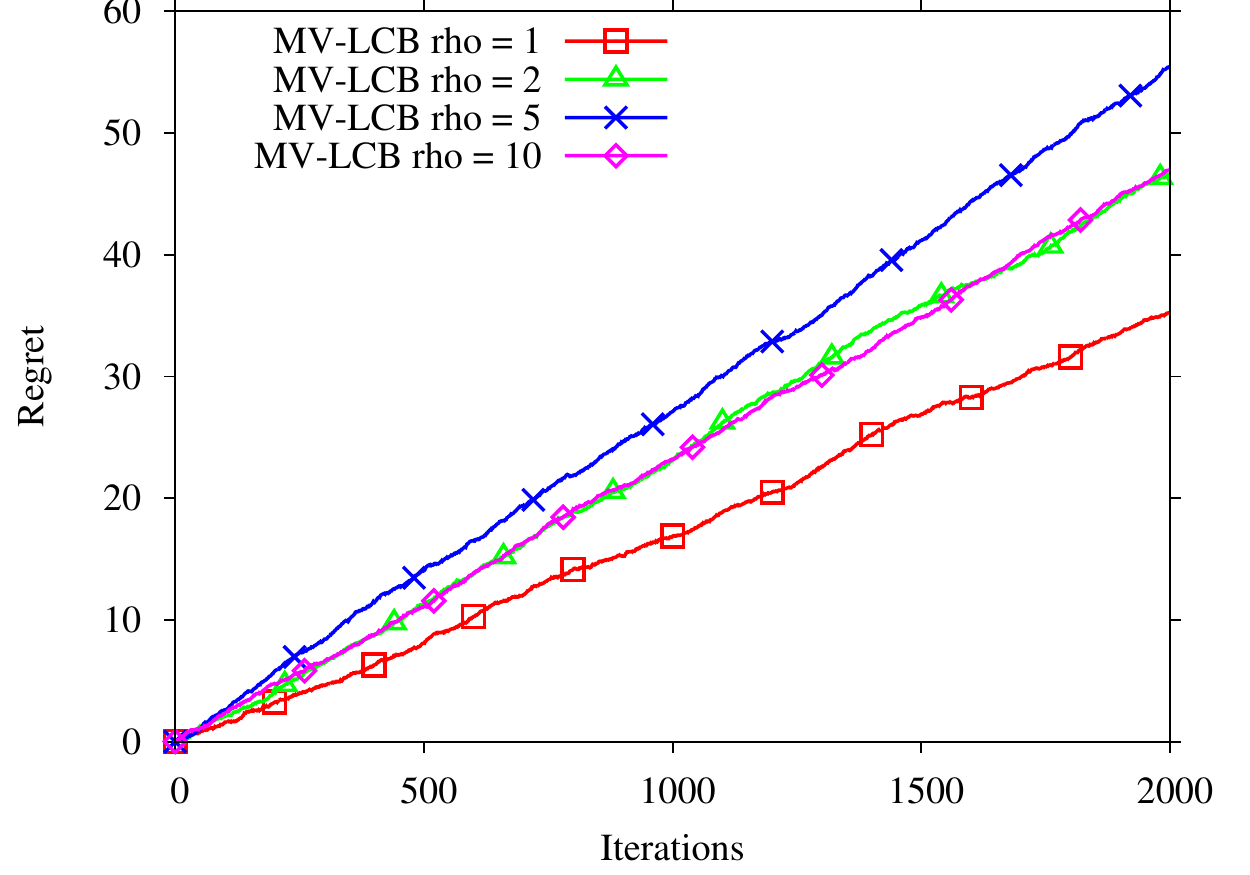}\\
\multicolumn{2}{c}{MV-LCB $\rho \in \{ 1,2, 5,10 \}, \delta = \frac{1}{T^2}$ }\\
\includegraphics[width=.4\textwidth]{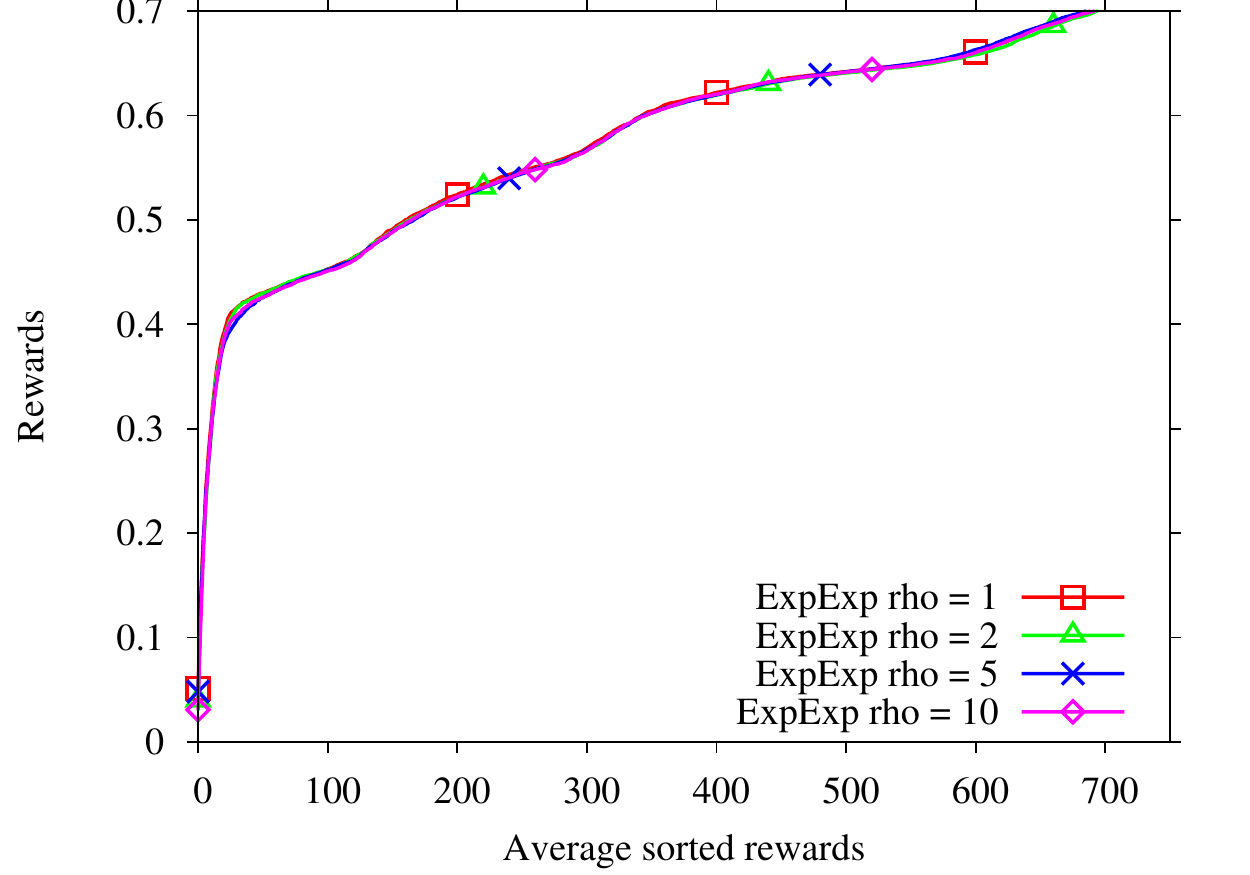} & \includegraphics[width=.4\textwidth]{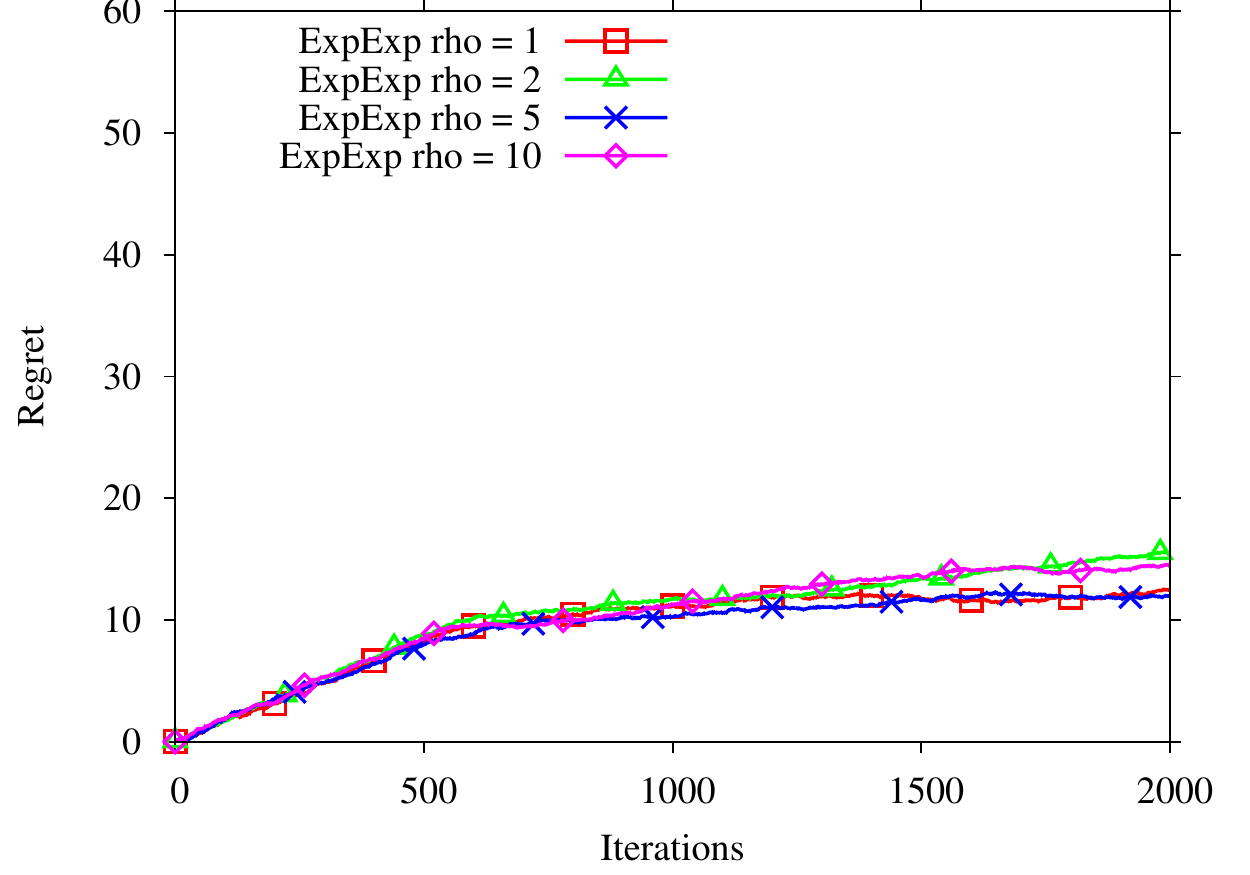}\\
\multicolumn{2}{c}{ExpExp $\rho \in \{ 1,2, 5,10 \}, \tau = K(\frac{T}{14})^{2/3}$ }\\
\includegraphics[width=.4\textwidth]{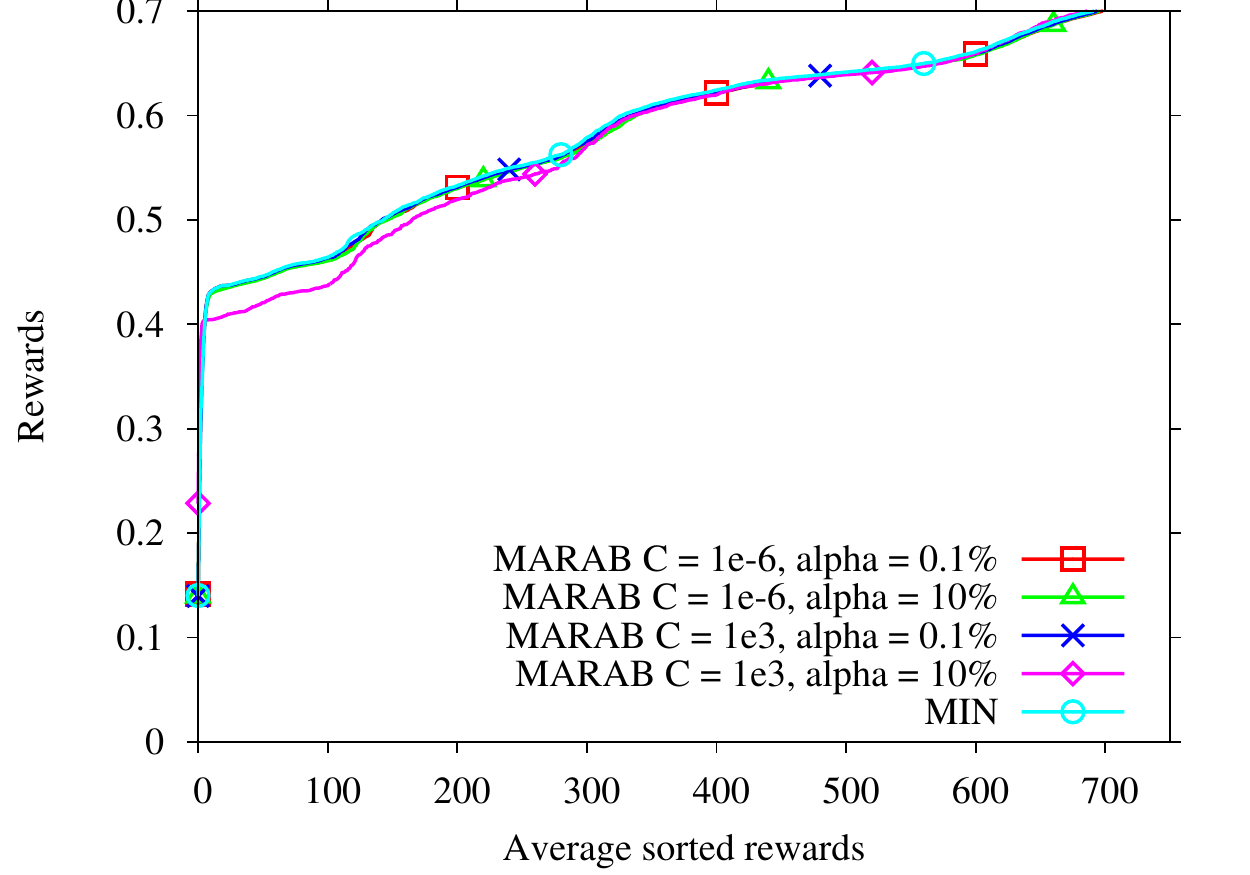} & \includegraphics[width=.4\textwidth]{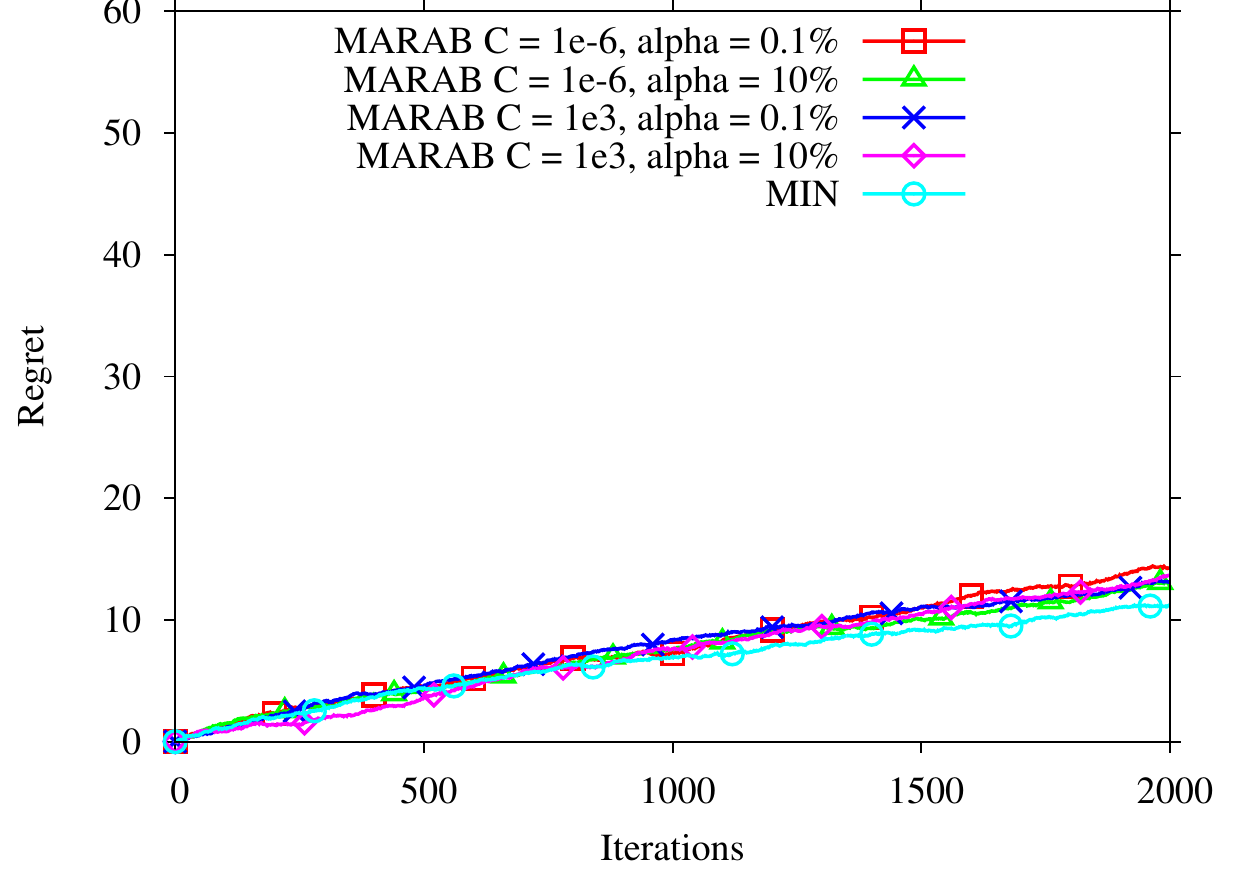}\\
\multicolumn{2}{c}{\XX{} $C \in \{ 10^{-6}, 10^3\}, \alpha \in \{ 0.1\%, 10\% \}$ }\\
\end{tabular}
\end{center}
\caption{Comparative performance of UCB, \MV, \Exp and \XX\ on a real-world energy management problem. Left: sorted instant rewards (truncated to the 37.5\% worst cases for readability). Right: empirical cumulative regret with time horizon $T = 100K$, averaged out of 40 runs.}
\label{enRewards}
\end{figure}

\section{Discussion and perspectives}

The first contribution of the paper, as a step toward an effective trade-off between exploitation, exploration and safety, is to show the theoretical 
soundness of the MIN algorithm. This result relies on two main assumptions: i) same arms are optimal in the perspective of regret 
and risk minimization; ii) the arm reward distributions are lower bounded in the neighborhood of their minimum on the other hand. 
Not only does MIN achieve logarithmic regret; it also yields a better rate than UCB under the additional assumption that min-related margins are higher than mean-related margins. 
A second contribution is the \XX\ strategy, yielding a reduced risk at the expense of a moderate regret increase compared to UCB for short and medium time horizons, 
on artifial problems (which only satisfies the lower-bounded distribution assumption) and on a real-world one.

Further work is concerned with the analysis of \XX\ behavior, specifically its mean-related regret (under the assumption that the arm with best mean also is the arm with best \CV) and its \CV-related regret; another priority is to compare \XX\ behavior with that of \cite{Maillard2013}. 
A second perspective regards the case where the arms belong to a metric space; the goal becomes to exploit this metrix to enforce exploration safety. Last but not least, \XX\ will be extended to tree-structured search spaces to achieve e.g. safe sequential decision making. 

\acks{We are grateful to J.-J. Christophe, J. Decock and the members of the Ilab Metis and Artelys, for fruitful collaboration. We thank the anonymous referees for their insightful comments.}

{\footnotesize
\bibliography{mybib}}
\end{document}